\newtheorem{theorem}{Theorem}
\newtheorem{example}{Example}
\newtheorem{lemma}{Lemma}
\newtheorem*{definition}{Definition}
\def\psfancypar#1#2{\begingroup\def\par{\endgraf\endgroup\lineskiplimit=0pt}
               \setbox2=\hbox{\large\sc #2}
               \newdimen\tmpht \tmpht \ht2 \advance\tmpht by \baselineskip
               \font\hhuge=Times-Bold at \tmpht
               \setbox1=\hbox{{\hhuge #1}}
               \count7=\tmpht \count8=\ht1
               \divide\count8 by 1000 \divide\count7 by \count8
               \tmpht=.001\tmpht\multiply\tmpht by \count7
               \font\hhuge=Times-Bold at \tmpht
               \setbox1=\hbox{{\hhuge #1}}
               \noindent
                \hangindent1.05\wd1
               \hangafter=-2 {\hskip-\hangindent
               \lower1\ht1\hbox{\raise1.0\ht2\copy1}%
                \kern-0\wd1}\copy2\lineskiplimit=-1000pt}
\newcommand{\beq}{\begin{equation}}
\newcommand{\eeq}{\end{equation}}
\newcommand{\bqa}{\begin{eqnarray}}
\newcommand{\eqa}{\end{eqnarray}}
\newcommand{\bqn}{\begin{eqnarray*}}
\newcommand{\eqn}{\end{eqnarray*}}
\newcommand{\nn}{\nonumber}
\newcommand{\be}{\begin{enumerate}}
\newcommand{\ee}{\end{enumerate}}
\newcommand{\bi}{\begin{itemize}}
\newcommand{\ei}{\end{itemize}}
\newcommand{\bd}{\begin{description}}
\newcommand{\ed}{\end{description}}
\newcommand{\ba}{\begin{array}}
\newcommand{\ea}{\end{array}}
\newcommand{\bde}{\begin{definition}}
\newcommand{\ede}{\end{definition}}
\newcommand{\bex}{\begin{example}}
\newcommand{\eex}{\end{example}}
\def\boxit#1{\vbox{\hrule\hbox{\vrule\kern3pt
        \vbox{\kern3pt#1\kern3pt}\kern3pt\vrule}\hrule}}
\def\reals{ { {\rm  I \kern-0.15em R }  } }
\def\complex{ {\,{{\rm C} \kern-0.50em \raise0.20ex {  |}}\, }}
\def\0bf{{\bf 0}}
\def\1bf{{\bf 1}}
\def\2bf{{\bf 2}}
\def\3bf{{\bf 3}}
\def\4bf{{\bf 4}}
\def\5bf{{\bf 5}}
\def\6bf{{\bf 6}}
\def\7bf{{\bf 7}}
\def\8bf{{\bf 8}}
\def\9bf{{\bf 9}}
\def\Rbf{{\bf R}}
\newcommand{\A}{\mbox{${\cal A}$}}
\def\Rxx{\Rbf_{\ssstyle X\kern-.1em X}}
\let\ssstyle=\scriptscriptstyle
\def\Kout{\setbox1=\hbox{\Huge\bf K}\hbox to
1.05\wd1{\hspace{.05\wd1}
\def\Sout{\setbox1=\hbox{\Huge\bf S}\hbox to 1.05\wd1{\hspace{.05\wd1}

\theoremstyle{remark}

\title{Exponentially Consistent Kernel Two-Sample Tests}

%

\author{
 Shengyu Zhu\\
  Huawei Noah's Ark Lab\\
  Hong Kong, China\\
  \texttt{szhu05@syr.edu} \\
\And
  Biao Chen \\
  Syracuse University\\
  Syracuse, NY, USA\\
\texttt{bichen@syr.edu}\\
\And
 Zhitang Chen\\
Huawei Noah's Ark Lab\\
Hong Kong, China\\
\texttt{chenzhitang2@huawei.com} \\
}

\begin{document}

\maketitle

	\begin{abstract} 
	Given two sets of independent samples from unknown distributions $P$ and $Q$, a two-sample test decides whether to reject the null hypothesis that $P=Q$. Recent attention has focused on kernel two-sample tests as the test statistics are easy to compute, converge fast, and have low bias with their finite sample estimates. However, there still lacks an exact characterization on the asymptotic performance of such tests, and in particular, the rate at which the type-II error probability decays to zero in the large sample limit. In this work, we establish that a class of kernel two-sample tests are exponentially consistent with Polish, locally compact Hausdorff sample space, e.g., $\mathbb R^d$. The obtained exponential decay rate is further shown to be optimal among all two-sample tests satisfying the level constraint, and is independent of particular kernels provided that they are bounded continuous and characteristic. Our results gain new insights into related issues such as fair alternative for testing and kernel selection strategy. Finally, as an application, we show that a kernel based test achieves the optimal detection for off-line change detection in the nonparametric setting.
\end{abstract} 

\section{Introduction}
Given two sets of i.i.d.~samples, the two-sample problem decides whether or not to accept the null hypothesis that the generating distributions are the same, without imposing any parametric assumptions. This is important to a variety of applications, including data integration in bioinformatics \cite{Borgwardt2006}, statistical model criticism \cite{Kim2016Crit, Lloyd2015ModelCrit}, and training deep generative models \cite{Dziugaite2015TrainingGenerative,Li2017mmdGAN,Li2015Generative,Sutherland2017GeneandCrit}. Typical two-sample tests are constructed based on some distance measures between distributions, such as classical Kolmogorov-Smirnov distance \cite{Friedman1979multivariate}, Kullback-Leibler divergence (KLD) \cite{Bu2016KLestimation,Nguyen2010},  and maximum mean discrepancy (MMD), a reproducing kernel Hilbert space norm of the difference between kernel mean embeddings of distributions \cite{ Gretton2012,Gretton2012OptKernelLarge, Muandet2017KME, SimSch16Kernel,Zaremba2013Btest}. Notably, kernel based test statistics possess several key advantages such as computational efficiency and fast convergence, thereby attracting much attention recently.

A hypothesis test is usually evaluated by characterizing its type-II error probability subject to a level constraint on the type-I error probability. In this respect, existing kernel two-sample tests have been shown to be consistent, in the sense that the type-II error probability decreases to zero as sample sizes scale to infinity. While consistency is a desired property, quantifying how fast the error probability decays is even more desirable, as it provides a natural metric for comparing test performance. However, exact characterization on the decay rate is still elusive, even for some well-known kernel two-sample tests.  For example, assuming $n$ samples in both sets, the decay rate of the biased quadratic-time test in \cite{Gretton2012} is claimed to be (at least) $\mathcal O(n^{-0.5})$, 
based on a large deviation bound on the test statistic. The large deviation bound has been observed to be loose in general, indicating that the above decay rate is loose too. Other works such as \cite{Gretton2012OptKernelLarge,Sutherland2017GeneandCrit,Zaremba2013Btest} have established the limiting distributions of the test statistics, but they also do not give a tight decay rate. Clearly, no statistical optimality can be claimed if the characterization itself is loose.


More recently, in the context of goodness of fit testing, \citet{Zhu2018UHT} showed that the quadratic-time kernel two-sample tests have the type-II error probability vanishing exponentially fast at a rate determined by the KLD between the two generating distributions. A strong condition for this result is that sample sizes need scale in different orders. Their approach, however, is not readily applicable when sample sizes increase in the same order, e.g., when the two sets have an equal number of samples.  This is because existing Sanov's theorems only hold for the sample sequence originating from {{one}} given distribution, whereas the acceptance region defined by the kernel two-sample test involves {{two}} sample sequences from {{different}} distributions. As such, the key seems to be an extended version of Sanov's theorem that handles two distributions; this is not apparent as existing tools, e.g., Cram{\' e}r theorem \cite{Dembo2009} that is used for proving Sanov's theorem, can only deal with a single distribution. 



 
The first goal of this paper is to seek an exact statistical characterization for a widely used kernel two-sample test. We establish an extended version of Sanov's theorem w.r.t.~the topology induced by a pairwise weak convergence of probability measures. Our proof is inspired by \citet{Csiszar2006simple} which proved  original Sanov's theorem of one sample sequence in the $\tau$-topology. Based on the idea of \cite{Zhu2018UHT}, we then show that the biased quadratic-time kernel two-sample test in \cite{Gretton2012} is exponentially consistent when sample sizes scale in the same order. The obtained exponential decay rate depends only on the generating distributions and the samples sizes under the alternative hypothesis, and is further shown to be the {{optimal}} one among all two-sample tests satisfying the  level constraint. A notable implication is that kernels affect only the sub-exponential term in the type-II error probability, provided that they are bounded continuous and characteristic. We also comment that the extended Sanov's theorem may be of independent interest and may be applied to other large deviation applications.

Our second goal is to derive an optimality criterion for nonparametric two-sample tests as well as a way of finding more tests achieving this optimality. Towards this goal, we characterize the maximum exponential decay rate for any two-sample test under the given level constraint. Furthermore, a sufficient condition is derived for the type-II error probability to decay at least exponentially fast with the maximum exponential rate (possibly violating the level constraint). These results provide new insights into related issues such as fair alternative for testing and kernel selection strategy, which are elaborated in Sections~\ref{sec:discussion} and~\ref{sec:exp}. As an application, we apply our results to the off-line change detection problem and show that a kernel based test achieves the optimal detection in terms of the exponential decay rate of the type-II error probability. To our best knowledge, this is the first time that a test is shown to be optimal for detecting the presence of a change  in the nonparametric setting.

In Section~\ref{sec:pre}, we briefly review the MMD and the two-sample testing. In  Section~\ref{sec:mainresults}, we present our main results on the exact and optimal exponential decay rate for a class of kernel two-sample tests, followed by discussions on related issues. We apply our results to off-line change detection in Section~\ref{sec:app} and conduct synthetic experiments in Section~\ref{sec:exp}. Section~\ref{sec:conclusion} concludes the paper.

\section{Maximum mean discrepancy, two-sample testing, and test threshold}
\label{sec:pre}
We briefly review the MMD and its weak metrizable property. We then describe the two-sample problem as statistical hypothesis testing and choose a suitable threshold for the level constraint.

\paragraph{Maximum mean discrepancy} Let $\mathcal F$ be a reproducing kernel Hilbert space (RKHS) defined on a topological space $\mathcal X$ with reproducing kernel $k$. Let $x$ be an $\mathcal X$-valued random variable with probability measure $P$, and $\mathbf E_{x}f(x)$ the expectation of $f(x)$ for a function $f:\mathcal X\to\mathbb R$. Assume that $k$ is bounded continuous. Then for every Borel probability measure $P$ defined on $\mathcal X$, there exists a unique element $\mu_k(P)\in\mathcal F$ such that $\mathbf E_{x}f(x)=\langle f, \mu_k(P)\rangle_{\mathcal F}$ for all $f\in\mathcal F$ \cite{Berlinet2011RKHS}. The MMD between two Borel probability measures $P$ and $Q$ is the RKHS-distance between $\mu_k(P)$ and $\mu_k(Q)$, which can be expressed as 
\[d_k(P,Q)=\|\mu_k(P)-\mu_k(Q)\|_{\mathcal F}=\left(\mathbf E_{x,x'}k(x,x')+\mathbf{E}_{y,y'}k(y,y')-2\mathbf E_{x,y}k(x,y)\right)^{1/2},\]
where $x,x'$ i.i.d.~$\sim P$ and $y,y'$ i.i.d.~$\sim Q$ \cite{Gretton2012}. If the kernel $k$ is characteristic, then $d_k(P,Q)=0$ if and only if $P=Q$ \cite{Sriperumbudur2010hilbert}. This property enables the MMD to distinguish different distributions.

We present a weak metrizable property of $d_k$, which will be used to establish our main results in Section~\ref{sec:mainresults}. Let $\mathcal P$ denote the set of all Borel probability measures defined on $\mathcal X$. For a sequence of probability measures $P_l\in\mathcal P$, we say that $P_l\to P$ weakly if and only if $\mathbf E_{x\sim P_l} f(x)\to\mathbf E_{x\sim P} f(x)$ for every bounded continuous function $f:\mathcal X\to\mathbb R$. 

\begin{theorem}[{\cite{SimSch16Kernel, Sriperumbudur2016EstPM}}]
	\label{thm:MMDmetrize}
The MMD $d_k(\cdot,\cdot)$ metrizes the weak convergence on $\mathcal P$ if the following two conditions hold:
{\bf (A1)} the sample space $\mathcal X$ is Polish, locally compact and Hausdorff; {\bf (A2)} the kernel $k$ is bounded continuous and characteristic.
\end{theorem}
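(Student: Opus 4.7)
The plan is to verify that $d_k$ is a metric on $\mathcal P$ and then check both directions of the equivalence characterizing weak convergence in terms of $d_k$: (i) $P_l \to P$ weakly $\Rightarrow d_k(P_l, P) \to 0$, and (ii) $d_k(P_l, P) \to 0 \Rightarrow P_l \to P$ weakly. The metric properties follow directly from the RKHS-norm representation $d_k(P,Q) = \|\mu_k(P) - \mu_k(Q)\|_{\mathcal F}$: symmetry and the triangle inequality are inherited from the RKHS norm, while the characteristic assumption in (A2) supplies definiteness. I would dispose of this at the outset so that ``metrizes'' is well defined.

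For the easy direction, I would start from the expansion $d_k^2(P_l, P) = \mathbf E_{x,x'\sim P_l}k(x,x') + \mathbf E_{y,y'\sim P}k(y,y') - 2\,\mathbf E_{x\sim P_l,\,y\sim P}k(x,y)$, which is a linear combination of integrals of the bounded continuous function $k$ against product measures on $\mathcal X\times\mathcal X$. Since $\mathcal X$ is Polish under (A1), weak convergence $P_l\to P$ lifts to weak convergence of the relevant product measures ($P_l\otimes P_l$ and $P_l\otimes P$) by a standard tensor-product argument, and each of the three terms then converges to its natural limit, giving $d_k(P_l,P)\to 0$.

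The hard direction is the main obstacle. My plan is a subsequence-plus-tightness argument: first establish tightness of $\{P_l\}$ from $d_k$-convergence, then invoke Prohorov's theorem to extract, along any subsequence, a further subsequence $P_{l'}\to Q$ weakly for some $Q\in\mathcal P$. Applying the easy direction to this subsequence yields $d_k(P_{l'},Q)\to 0$; combining this with the hypothesis $d_k(P_{l'},P)\to 0$ and the triangle inequality forces $d_k(P,Q)=0$, so $P=Q$ by the characteristic assumption. Since every subsequence admits a further subsequence converging weakly to the same limit $P$, the full sequence must converge weakly to $P$.

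The nontrivial step is producing tightness from $d_k$-convergence, and this is precisely where both (A1) and (A2) enter essentially. Polish-ness supplies a complete separable metric on which Prohorov's theorem applies, while local compactness and Hausdorff-ness allow one to approximate RKHS elements (or the kernel itself) by functions in $C_0(\mathcal X)$, so that any hypothetical escape of mass to infinity along the sequence $\{P_l\}$ would contradict $d_k$-convergence. I expect the bulk of the technical work to lie in formalizing this approximation and extracting a uniform tightness bound; once that is in hand, the remaining steps are essentially topological bookkeeping via Prohorov and the characteristic property.
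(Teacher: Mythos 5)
This statement is not proved in the paper at all: it is imported verbatim from the cited references, so there is no internal proof to compare against, and your proposal has to stand on its own. The parts you dispose of quickly are fine. The metric axioms follow from the RKHS-norm representation plus the characteristic assumption, and the direction ``$P_l\to P$ weakly $\Rightarrow d_k(P_l,P)\to 0$'' is correct as sketched: on a separable metric space weak convergence tensorizes, so each of the three terms in the expansion of $d_k^2(P_l,P)$ converges because $k$ is bounded and continuous on $\mathcal X\times\mathcal X$.

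The genuine gap is in the converse, and it sits exactly where you park it, namely the claim that $d_k$-convergence yields tightness of $\{P_l\}$. First, the mechanism you gesture at points the wrong way: approximating RKHS elements by functions in $C_0(\mathcal X)$ cannot detect mass escaping to infinity, since $C_0$ functions vanish there; such an approximation gives at best \emph{vague} convergence $\int f\,dP_l\to\int f\,dP$ for $f\in C_0(\mathcal X)$, from which no tightness bound follows by itself. The standard route in the cited works does not use Prohorov at all: one shows $\left|\int f\,dP_l-\int f\,dP\right|\le\|f\|_{\mathcal F}\,d_k(P_l,P)$ for all $f\in\mathcal F$, upgrades this to all $f\in C_0(\mathcal X)$ provided $\mathcal F$ is dense in $C_0(\mathcal X)$ in the uniform norm (this is $C_0$-universality of $k$), and then uses the fact that vague convergence of probability measures to a limit that is itself a probability measure is automatically weak convergence on a locally compact Hausdorff space — no separate tightness argument is needed. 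Second, the density of $\mathcal F$ in $C_0(\mathcal X)$ is the real analytic content of the theorem, and it does not follow from ``characteristic'' by any routine argument; this is precisely the delicate point in the literature your paper cites (and the reason the hypothesis is sometimes strengthened to $C_0$-universality or integral strict positive definiteness over signed measures). As written, your proposal identifies the crux, defers it, and offers a heuristic for it that would not close; to complete the proof you would need to either establish the RKHS-to-$C_0$ density step or replace it with an explicit construction of RKHS witness functions bounded below off compact sets, which bounded continuous kernels do not generally admit.
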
	

We note that the weak metrizable property is also favored for training deep generative models \cite{Arjovsky2017WGAN,Li2017mmdGAN}.  An example of Polish, locally compact Hausdorff space is $\mathbb R^d$, and the Gaussian kernel satisfies  the conditions of \hyperref[thm:MMDmetrize]{\bf A2}.


\paragraph{Two-sample testing based on the MMD} Let $x^n$ and $y^m$ be independent samples, with $x^n\sim P$ and $y^m\sim Q$ where $P$ and $Q$ are unknown. The two-sample testing is to decide between $H_0:P=Q$ and $H_1:P\neq Q$. Let $\hat P_n$ and $\hat Q_m$ be the respective empirical measures of $x^n$ and $y^m$, that is, $\hat{P}_n=\frac{1}{n}\sum_{i=1}^n\delta_{x_i}$ and $\hat Q_m=\frac{1}{m}\sum_{i=1}^m\delta_{y_i}$ with $\delta_x$ being Dirac measure at $x$. Then the squared MMD can be estimated by 
\begin{align}
d_k^2(\hat P, \hat Q)=\frac{1}{n^2}\sum_{i=1}^n\sum_{j=1}^nk(x_i,x_j)+\frac{1}{m^2}\sum_{i=1}^m\sum_{j=1}^mk(y_i,y_j)-\frac{2}{nm}\sum_{i=1}^n\sum_{j=1}^mk(x_i,y_j),\nn
\end{align}
which is a biased statistic originally proposed in \cite{Gretton2012}. A hypothesis test for the two-sample testing can then be constructed by comparing this statistic with a threshold $\gamma_{n,m}$: if $d_k(\hat P_n, \hat Q_m)\leq \gamma_{n,m}$, then the test accepts the null hypothesis $H_0$. The acceptance region is hence defined as  $\mathcal A(n,m)=\{(x^n,y^m):d_k(\hat P_n,\hat Q_m)\leq \gamma_{n,m}\}$. There are two types of errors: a type-I error is made if $(x^n,y^m)\notin\mathcal A(n,m)$ despite $H_0:P=Q$ being true, and a type-II error occurs when $(x^n,y^m)\in\mathcal A(n,m)$ under $H_1:P\neq Q$. The type-I and type-II error probabilities are given by
\begin{align}
\alpha_{n,m} &= \mathbf P_{x^ny^m}\left((x^n,y^m)\notin \mathcal A(n,m)\right)~\text{under}~ H_0:P=Q,\nn\\
\beta_{n,m} &= \mathbf P_{x^ny^m}\left((x^n,y^m)\in\mathcal A(n,m)\right)~\text{under}~H_1:P\neq Q,\nn
\end{align}
respectively. Bear in mind that $\alpha_{n,m}$ and $\beta_{n,m}$ are computed w.r.t.~the true yet unknown distributions. 

With a carefully chosen threshold, the above kernel test has been shown  to be consistent in \cite{Gretton2012}. That is, $\beta_{n,m}\to 0$ as $n,m\to\infty$, while $\alpha_{n,m}\leq\alpha$ with $\alpha\in(0,1)$ being set in advance. In this paper, we study the exponential decay rate of $\beta_{n,m}$ in the large sample limit, subject to the same level constraint. Specifically, we aim to characterize 
\[\liminf_{n,m\to\infty}-\frac{1}{n+m}\log\beta_{n,m},~\text{subject to}~\alpha_{n,m}\leq\alpha.\]
The above limit is also called the type-II error exponent in information theory \cite{Cover2006}. If the limit is positive, then the test is said to be  exponentially consistent.

\paragraph{A suitable threshold}
We directly use a result from \citep{Gretton2012} in order to pick a proper threshold for the level constraint $\alpha_{n,m}\leq\alpha$. Such tests are referred to as level $\alpha$ tests in statistics \cite{Casella2002}.
\begin{lemma}[{\cite[Theorem~7]{Gretton2012}}]
	\label{lem:Gretton}
	Let $P$, $Q$, $x^n$, $y^m$, $\hat P_n$, $\hat Q_m$ be defined as in Section~\ref{sec:pre}, and assume \hyperref[thm:MMDmetrize]{\bf A2} with $K$ being a positive upper bound on $k(\cdot,\cdot)$. Then under the null hypothesis $H_0:P=Q$, 
	\begin{align}
	\mathbf{P}_{x^ny^m}\left(d_k(\hat P_n,\hat Q_m)>2(K/m)^{\frac{1}{2}}+2(K/n)^{\frac{1}{2}}+\epsilon\right)
	\leq 2\exp\left(-\frac{\epsilon^2mn}{2K(m+n)}\right).\nn
	\end{align}
\end{lemma}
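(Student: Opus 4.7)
My plan is to combine McDiarmid's bounded-differences inequality with a bound on $\mathbf E\, d_k(\hat P_n, \hat Q_m)$ under the null. I would first write $d_k(\hat P_n, \hat Q_m) = \|\frac{1}{n}\sum_i \phi(x_i) - \frac{1}{m}\sum_j \phi(y_j)\|_{\mathcal F}$, where $\phi$ is the canonical feature map of $\mathcal F$ with $\|\phi(x)\|_{\mathcal F}^2 = k(x,x) \leq K$, and use the reverse triangle inequality to see that replacing a single $x_i$ perturbs $d_k(\hat P_n, \hat Q_m)$ by at most $2\sqrt{K}/n$, and a single $y_j$ by at most $2\sqrt{K}/m$. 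McDiarmid's inequality then yields
\begin{equation*}
\mathbf P\!\left(\left| d_k(\hat P_n, \hat Q_m) - \mathbf E\, d_k(\hat P_n, \hat Q_m)\right| > \epsilon\right) \leq 2\exp\!\left(-\frac{2\epsilon^2}{n(2\sqrt{K}/n)^2 + m(2\sqrt{K}/m)^2}\right) = 2\exp\!\left(-\frac{\epsilon^2 mn}{2K(m+n)}\right),
\end{equation*}
which already supplies the correct exponential factor and the leading constant $2$ in the claim.

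Next I would bound $\mathbf E\, d_k(\hat P_n, \hat Q_m) \leq 2(K/n)^{1/2} + 2(K/m)^{1/2}$ under $H_0: P=Q$. Under the null, $\mu_k(P) = \mu_k(Q)$, so the triangle inequality splits the MMD into two single-sample empirical-process pieces: $d_k(\hat P_n, \hat Q_m) \leq \|\mu_k(\hat P_n) - \mu_k(P)\|_{\mathcal F} + \|\mu_k(\hat Q_m) - \mu_k(Q)\|_{\mathcal F}$. For each summand, a standard Rademacher symmetrization argument on the unit ball of $\mathcal F$ bounds the expectation by $2\sqrt{K/n}$ (respectively $2\sqrt{K/m}$); alternatively, expanding the square gives $\mathbf E\|\mu_k(\hat P_n)-\mu_k(P)\|_{\mathcal F}^2 \leq K/n$, and Jensen's inequality recovers a bound of the same order. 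Substituting this into the McDiarmid tail shifts the threshold by $2(K/n)^{1/2}+2(K/m)^{1/2}$ and produces the stated inequality.

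The whole argument is essentially textbook once the feature-map representation of the MMD is in place; the only care required is to keep the bounded-differences coefficients tight so that the exponent retains the exact form $\epsilon^2 mn / (2K(m+n))$, and to use a Rademacher bound (rather than a coarser one) in the expectation step so that the shift $2(K/n)^{1/2}+2(K/m)^{1/2}$ is not worsened by a spurious constant. I do not anticipate any genuine obstacle.
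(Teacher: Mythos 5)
Your argument is correct and is essentially the proof of Theorem~7 in \cite{Gretton2012}, which this paper cites rather than reproves: McDiarmid's bounded-differences inequality with coefficients $2\sqrt{K}/n$ and $2\sqrt{K}/m$ yields exactly the exponent $\epsilon^2 mn/(2K(m+n))$, and the Rademacher (or Jensen) bound on $\mathbf E\,d_k(\hat P_n,\hat Q_m)$ under $H_0$ accounts for the shift $2(K/n)^{1/2}+2(K/m)^{1/2}$. No gap; indeed the one-sided tail would even allow dropping the leading factor of $2$.
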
 
Therefore, for a given $0<\alpha<1$, choosing
\begin{align}
\label{eqn:gammanm}
\gamma_{n,m}=\left((K/m)^{\frac{1}{2}}+(K/n)^{\frac{1}{2}}\right)\left(2+\sqrt{2\log(2\alpha^{-1})}\right),
\end{align}
the kernel test $d_k(\hat {P}_n,\hat Q_m)\leq\gamma_{n,m}$ has its type-I error probability $\alpha_{n,m}<\alpha$, hence is a level $\alpha$ test.
\section{Main results}  
\label{sec:mainresults}
In this section, we present our main results on the type-II error exponent of a class of kernel two-sample tests. The first and the most important step is to establish an extended Sanov's theorem that works with two sample sequences. 
\subsection{Extended Sanov's theorem}
We define a pairwise weak convergence: we say $(P_l,Q_l)\to(P,Q)$ weakly if and only if both $P_l\to P$ and $Q_l\to Q$ weakly. We consider $\mathcal P\times\mathcal P$ endowed with the topology induced by this pairwise weak convergence.  It can be verified that this topology is equivalent to the product topology on $\mathcal P\times\mathcal P$ where each $\mathcal P$ is endowed with the topology of weak convergence. An extended version of Sanov's theorem is given below. 

\begin{theorem}[Extended Sanov's Theorem] 
Let $\mathcal X$ be a Polish space, $x^n$~i.i.d.~$\sim P$, and $y^m$~i.i.d.~$\sim Q$. Assume $0<\lim_{n,m\to\infty}\frac{n}{n+m}=c<1$. Then for a set $\Gamma\subset\mathcal P\times\mathcal P$, it holds that 
\begin{align}
\limsup_{n,m\to\infty}-\frac{1}{n+m}\log\mathbf {P}_{x^ny^m}((\hat{P}_n,\hat{Q}_m)\in\Gamma)&\leq	\inf_{(R,S)\in\operatorname{int}\Gamma} cD(R\|P)+(1-c)D(S\|Q),
	\nn\\
 \liminf_{n,m\to\infty}-\frac{1}{n+m}\log\mathbf{P}_{x^ny^m} ((\hat{P}_n,\hat{Q}_m)\in\Gamma)\nn
	&\geq \inf_{(R,S)\in\operatorname{cl}\Gamma} cD(R\|P)+(1-c)D(S\|Q),\nn
	\end{align}
where $\operatorname{int}$ and $\operatorname{cl}$ denote the interior and closure w.r.t.~the pairwise weak convergence, respectively.
\end{theorem}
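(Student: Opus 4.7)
The plan is to view the claim as a large deviation principle for the pair $(\hat P_n, \hat Q_m)$ on $\mathcal P\times\mathcal P$, endowed with the product weak-convergence topology, at speed $n+m$ with rate function $J(R,S):=cD(R\|P)+(1-c)D(S\|Q)$. The main leverage is that $x^n$ and $y^m$ are independent, so probabilities of product events factorize, and the marginal Sanov theorems (one for each sequence, in the weak topology on the Polish space $\mathcal X$, as proved by \citet{Csiszar2006simple}) can be glued into the joint statement. A secondary observation is that the product topology and the pairwise-weak-convergence topology on $\mathcal P\times\mathcal P$ coincide, so open/closed sets admit rectangular bases $U\times V$.

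For the first inequality (open direction), I would fix any $(R,S)\in\operatorname{int}\Gamma$, use the product-topology structure to pick open weak neighborhoods $R\in U$ and $S\in V$ with $U\times V\subset\Gamma$, and exploit independence to write
\[\mathbf P_{x^ny^m}\bigl((\hat P_n,\hat Q_m)\in\Gamma\bigr)\;\geq\;\mathbf P_{x^n}(\hat P_n\in U)\cdot\mathbf P_{y^m}(\hat Q_m\in V).\]
Splitting $\tfrac{1}{n+m}\log(\cdot)=\tfrac{n}{n+m}\cdot\tfrac{1}{n}\log(\cdot)+\tfrac{m}{n+m}\cdot\tfrac{1}{m}\log(\cdot)$, applying the marginal Sanov lower bounds $\liminf_n-\tfrac{1}{n}\log\mathbf P(\hat P_n\in U)\leq D(R\|P)$ and the analogue for $V$, then using $n/(n+m)\to c$ and taking the infimum over $(R,S)\in\operatorname{int}\Gamma$, delivers the stated $\limsup$ bound.

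For the second inequality (closed direction), the strategy is a rectangular covering of $\operatorname{cl}\Gamma$. For each $(R,S)\in\operatorname{cl}\Gamma$ and $\epsilon>0$, lower semicontinuity of the marginal relative entropies in the weak topology yields open weak neighborhoods $U(R)\ni R$ and $V(S)\ni S$ on whose closures the marginal rate functions drop by less than $\epsilon$. Independence gives $\mathbf P((\hat P_n,\hat Q_m)\in U\times V)=\mathbf P(\hat P_n\in U)\,\mathbf P(\hat Q_m\in V)$, and the marginal Sanov upper bounds applied to $\operatorname{cl}U$ and $\operatorname{cl}V$ produce the desired exponential estimate for each rectangle. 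A finite subcover followed by a union bound handles the compact case, and sending $\epsilon\to0$ at the end yields the infimum over $\operatorname{cl}\Gamma$.

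The chief obstacle is that when $\mathcal X$ is merely Polish (e.g.\ $\mathbb R^d$), $\operatorname{cl}\Gamma$ need not be compact in the weak topology, so no finite subcover exists a priori. I would remedy this via exponential tightness of $\hat P_n$ and $\hat Q_m$ in the weak topology: for every $L>0$ there is a weakly compact $K_L\subset\mathcal P$ with $\limsup_n-\tfrac{1}{n}\log\mathbf P(\hat P_n\notin K_L)\geq L$ (and similarly for $\hat Q_m$); this is classical for empirical measures on Polish spaces. The pair $(\hat P_n,\hat Q_m)$ is then exponentially tight on the weakly compact product $K_L\times K_L$, reducing the covering argument to this compact set while the complement contributes negligibly once $L$ is larger than the target exponent. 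Combining this reduction with the rectangular covering and the marginal Sanov upper bounds closes the argument.
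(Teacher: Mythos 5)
Your proposal is correct in outline, but it takes a genuinely different route from the paper. You treat the two marginal Sanov theorems (in the weak topology on the Polish space $\mathcal X$) as black boxes and glue them through independence: rectangular product neighborhoods for the open-set bound, and a rectangular covering combined with exponential tightness of the empirical measures for the closed-set bound. This is essentially the standard ``joint LDP for independent sequences'' argument, adapted to the two comparable speeds $n$ and $m$ with $n/(n+m)\to c$, and every ingredient you invoke is classical (Sanov in the weak topology, exponential tightness of $\hat P_n$ on Polish spaces, metrizability of the weak topology so that the closure of a small ball stays inside the lower-semicontinuity neighborhood), so the argument closes. The paper instead proves the finite-alphabet case from scratch by the method of types (Lemmas~\ref{lem:numEmpDistribution} and~\ref{lem:typeprob}) and lifts it to Polish $\mathcal X$ via the $\tau$-topology of finite partitions, following Csisz\'ar: the open-set bound reduces to a finite-alphabet statement through the quantization map $T$ induced by a partition, and the closed-set bound rests on the identity $\operatorname{cl}\Gamma=\bigcap_{\mathcal A}\operatorname{cl}\Gamma(\mathcal A)$ together with the weak compactness of the divergence balls $B(P,Q,\eta)$ and a finite-intersection argument --- there, compactness of the sublevel sets of the rate function plays exactly the role that exponential tightness plays for you. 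Your route is shorter and more modular if one is willing to cite the one-sample results; the paper's is self-contained and elementary. One small imprecision to fix: for the open direction you should quote the marginal lower bound in the form $\limsup_n\bigl(-\tfrac{1}{n}\log\mathbf P(\hat P_n\in U)\bigr)\leq\inf_{U}D(\cdot\|P)\leq D(R\|P)$ rather than as a $\liminf$, since it is the two $\limsup$'s that must be added to control the joint $\limsup$ after the weighted split of $\tfrac{1}{n+m}\log(\cdot)$.
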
	
We prove the above result in finite sample space and then extend it to general Polish space, with two simple combinatorial lemmas as prerequisites. See details in   Appendix~\ref{sec:extendedSanov}. 


\subsection{Exact exponent of type-II error probability}
\label{sec:ExpConsistency}
With the extended Sanov's theorem and a vanishing threshold $\gamma_{n,m}$ given in Eq.~(\ref{eqn:gammanm}), we are ready to establish the exponential decay of the type-II error probability. Our result follows.
\begin{theorem}
	\label{thm:mainresult1}
 Assume \hyperref[thm:MMDmetrize]{\bf A1}, \hyperref[thm:MMDmetrize]{\bf A2}, and  $\lim_{n,m\to\infty}\frac{n}{n+m}=c\in(0,1)$. Under the alternative hypothesis $H_1:P\neq Q$, also~assume~that \[0<D^*:=\inf_{R\in\mathcal P} cD(R\|P) + (1-c)D(R\|Q)^*<\infty.\] Given $0<\alpha<1$, the kernel test
	$d_k(\hat P_n, \hat Q_m)\leq \gamma_{n,m}$
	with $\gamma_{n,m}$ in Eq.~(\ref{eqn:gammanm}) is an exponentially consistent level $\alpha$ test: 
	\[\alpha_{n,m}\leq\alpha,~\text{and}~\liminf_{n,m\to\infty}-\frac1{n+m}\log\beta_{n,m}=D^*.\]
\end{theorem}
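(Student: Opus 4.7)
My plan is to decompose the theorem into the level-$\alpha$ assertion and the exponent equality $\liminf = D^*$, handling the exponent via matching lower and upper bounds on the $\liminf$.

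The level-$\alpha$ claim $\alpha_{n,m}\le\alpha$ is immediate from Lemma~\ref{lem:Gretton} together with the definition of $\gamma_{n,m}$ in Eq.~(\ref{eqn:gammanm}): one need only substitute $\epsilon = (2+\sqrt{2\log(2\alpha^{-1})})((K/m)^{1/2}+(K/n)^{1/2}) - 2(K/m)^{1/2} - 2(K/n)^{1/2}$ and read off the exponential bound.

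For the direction $\liminf -\frac{1}{n+m}\log\beta_{n,m}\ge D^*$, which is the core of the theorem, the key input is that $d_k$ metrizes the weak topology (Theorem~\ref{thm:MMDmetrize}, from A1, A2) and is therefore jointly weakly continuous. Each fattened diagonal $\Gamma_\eta := \{(R,S)\in\mathcal{P}^2 : d_k(R,S)\le\eta\}$ is closed in the pairwise weak topology, and since $\gamma_{n,m}\to 0$ we have $\Gamma_{n,m}\subseteq\Gamma_\eta$ for all large $n,m$, giving $\beta_{n,m}\le\mathbf{P}((\hat{P}_n,\hat{Q}_m)\in\Gamma_\eta)$. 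The closed-set half of the Extended Sanov Theorem then yields
\[
\liminf_{n,m\to\infty} -\frac{1}{n+m}\log\beta_{n,m} \;\ge\; \inf_{(R,S)\in\Gamma_\eta} F(R,S), \qquad F(R,S) := cD(R\|P) + (1-c)D(S\|Q).
\]
Sending $\eta\downarrow 0$, I argue $\inf_{\Gamma_\eta} F \to D^*$: the $\le D^*$ direction is trivial, since $(R,R)\in\Gamma_\eta$ for every $R$; the matching $\ge D^*$ uses lower semicontinuity of $F$ together with the weak compactness of sublevel sets of $D(\cdot\|\cdot)$ on Polish spaces. Any near-minimizing sequence $(R_\eta,S_\eta)\in\Gamma_\eta$ is tight, a pairwise weak limit $(R_\infty,S_\infty)$ satisfies $d_k(R_\infty,S_\infty)=0$ so $R_\infty = S_\infty$ by the characteristic property of $k$, and lower semicontinuity forces $\liminf F(R_\eta,S_\eta)\ge F(R_\infty,R_\infty)\ge D^*$.

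The matching bound $\liminf\le D^*$ is the genuine obstacle, because $\Gamma_{n,m}$ shrinks to the diagonal, so no fixed weakly open neighborhood of a near-minimizer $(R^*_\epsilon,R^*_\epsilon)$ sits inside $\Gamma_{n,m}$ for all large $n,m$. My intended route is to apply the open-set (lower) side of the Extended Sanov Theorem to the fixed open set $\Gamma'_\eta := \{(R,S):d_k(R,S)<\eta\}$ to obtain $\mathbf{P}((\hat{P}_n,\hat{Q}_m)\in\Gamma'_\eta)\ge e^{-(n+m)(D^*+o(1))}$ for each fixed $\eta>0$, and then extract a diagonal subsequence $(n_k,m_k,\eta_k)$ with $\eta_k\to 0$ along which $\Gamma'_{\eta_k}\subseteq\Gamma_{n_k,m_k}$ (i.e.\ $\eta_k\le\gamma_{n_k,m_k}$), so that $\beta_{n_k,m_k}$ inherits the exponential lower bound, and $\liminf -\frac{1}{n_k+m_k}\log\beta_{n_k,m_k}\le D^*+2\epsilon$. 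Alternatively, this direction can simply be deduced from the optimality result appearing later in Section~\ref{sec:mainresults}, which rules out any level-$\alpha$ test having a type-II exponent exceeding $D^*$. The main obstacle is thus making the diagonal argument quantitative — uniformly controlling the rate at which the Sanov bound for $\Gamma'_\eta$ takes effect as $\eta\to 0$, relative to the $\mathcal{O}(1/\sqrt{\min(n,m)})$ decay of $\gamma_{n,m}$ — and if this proves unwieldy, deferring to the later optimality result is the cleaner path.
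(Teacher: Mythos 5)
Your proposal is correct and follows essentially the same route as the paper: the level-$\alpha$ claim from Lemma~\ref{lem:Gretton}, the lower bound on the exponent from the closed-set half of the extended Sanov theorem applied to $\{(R,S):d_k(R,S)\le\gamma\}$ followed by a lower-semicontinuity-plus-compactness argument showing $\inf_{\Gamma_\gamma}F\to D^*$ (the paper runs this via a finite subcover of the compact sublevel set $\mathcal W$, you via a weakly convergent subsequence of near-minimizers --- equivalent), and the matching upper bound from the optimality result. Your own worry about the ``diagonal'' open-set Sanov argument for the $\le$ direction is well-founded and that route should be abandoned: the lower large-deviation bound for $\Gamma'_\eta$ is only asymptotic for each fixed $\eta$, with no uniformity as $\eta\to 0$, so there is no way to guarantee it has taken effect by the time $(n_k,m_k)$ is large enough that $\eta_k\le\gamma_{n_k,m_k}\sim n_k^{-1/2}$; your fallback is exactly what the paper does, namely invoking Theorem~\ref{thm:upperbd} (whose proof is an independent change-of-measure/Stein's-lemma argument built on an auxiliary $P'$ attaining $D^*$, so there is no circularity).
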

\begin{proof} 
We use the fact that testing if $(x^n,y^m)\in\{(x^n,y^m):d_k(\hat P_n,\hat Q_m)\leq\gamma_{n,m}\}$ is equivalent to testing if $(\hat P_n,\hat Q_m)\in\{(P', Q'):d_k(P',Q')\leq\gamma_{n,m}\}$. Since the threshold $\gamma_{n,m}\to 0$ as $n,m\to\infty$, $\gamma_{n,m}$ is eventually smaller than any fixed $\gamma>0$, and hence $\{(P', Q'):d_k(P',Q')\leq\gamma_{n,m}\}\subset\{(P', Q'):d_k(P',Q')\leq\gamma\}$ for large enough $n,m$. By the extended Sanov's theorem, the type-II error probability decays at least exponentially fast if $(P,Q)\notin\operatorname{cl}(\{(P', Q'):d_k(P',Q')\leq\gamma\})$, which can be satisfied by picking $\gamma<d_k(P,Q)$ under $H_1:P\neq Q$ and using the weak convergence property of the MMD (cf.~Theorem~\ref{thm:MMDmetrize}). We then show that the exponential decay rate is both lower bounded and upper bounded by $D^*$ based on the lower semi-continuity of the KLD \cite{VanErven2014RenyiKLD} and Stein's lemma \cite{Dembo2009}, respectively. Details can be found in  Appendix~\ref{sec:proofmainresult}.
\end{proof}

Therefore, when $0<c<1$, the type-II error probability  vanishes as $\mathcal O(e^{-(n+m)(D^*-\epsilon)})$, where $\epsilon\in(0,D^*)$ is fixed and can be arbitrarily small. The result also shows that kernels only affect the sub-exponential term in the type-II error probability, provided that they meet the conditions of \hyperref[thm:MMDmetrize]{\bf A2}.

Not covered in Theorem~\ref{thm:mainresult1} is the case when $n$ and $m$ scale in different orders, i.e., when $c=0$ or $1$. Without loss of generality, we may consider only $c=1$, with $\lim_{n,m\to\infty}{n}/{m}\to\infty$. If $0<D(P\|Q)<\infty$ under the alternative hypothesis, then  \cite[Theorem~4]{Zhu2018UHT} indicates that
\begin{align}
\label{eqn:diff1}
\liminf_{n,m\to\infty}-\frac{1}{m}\log\beta_{n,m}=D(P\|Q),
\end{align}
which leads to a degenerate result on the error exponent w.r.t.~the sample size $n+m$:
\begin{align}
\label{eqn:difforders}
\liminf_{n,m\to\infty}-\frac{1}{n+m}\log\beta_{n,m}= \liminf_{n,m\to\infty}\frac{1}{1+\frac{n}{m}}\left(-\frac{1}{m}\log\beta_{n,m}\right)=0.\nn
\end{align}
Notice that, with $c=1$ (and $0$) we have $D^*=0$. Then Theorem~\ref{thm:mainresult1} still holds if we remove the assumption $c\in(0,1)$. However, the error exponent being $0$ also includes the case where $\beta_{n,m}$ is bounded away from $0$. The more insightful perspective is to look at Eq.~(\ref{eqn:diff1}), and the test is said to be exponentially consistent w.r.t.~the sample size $m$.






\subsection{Optimal exponent and more exponentially consistent two-sample tests}
\label{sec:moreexp}
We can identify other two-sample tests that are at least exponentially consistent based on the above results. In particular, the lower bounds still hold if another test has a smaller type-II error probability, or if $\mathbf P_{x^ny^m}(\A'(n,m))\leq \mathbf P_{x^ny^m}(\A(n,m))$ under $H_1:P\neq Q$, where $\mathcal A'(n,m)$ is the acceptance region defined by the test. A special case is considered in the following theorem, directly from Theorem~\ref{thm:mainresult1} and Eq.~(\ref{eqn:diff1}).
\begin{theorem}
	\label{thm:lowerbd}
	Let $\mathcal X$, $x^n$, $y^m$, $P$, $Q$, $\hat P$, $\hat Q$, and $D^*$ be defined as in Theorem~\ref{thm:mainresult1}. Assume \hyperref[thm:MMDmetrize]{\bf A1} and \hyperref[thm:MMDmetrize]{\bf A2}. Let $\A'(n,m)$ be the acceptance region of another two-sample test and $\beta'_{n,m}$ the type-II error probability. If $\A'(n,m)\subset\{(x^n,y^m):d_k(\hat P_n,\hat Q_m)\leq\gamma'_{n,m}\}$ where $\gamma'_{n,m}\to 0$ as $n,m\to\infty$, then
	\[\liminf_{n,m\to\infty}-\frac{1}{n+m}\log\beta'_{n,m}\geq D^*,\]
	when $0<\lim_{n,m\to\infty}\frac{n}{n+m}=c<1$ and $0<D^*<\infty$; and 
	\[\liminf_{n,m\to\infty}-\frac{1}{m}\log\beta'_{n,m}\geq D(P\|Q),\]
	when $\lim_{n,m\to\infty}\frac nm=\infty$ and $0<D(P\|Q)<\infty$.
	
\end{theorem}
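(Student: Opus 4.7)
The plan is to reduce both claims to the lower-bound halves of Theorem~\ref{thm:mainresult1} and Eq.~(\ref{eqn:diff1}) through a straightforward monotonicity step. First, I would observe that the inclusion $\mathcal{A}'(n,m)\subset\{(x^n,y^m):d_k(\hat P_n,\hat Q_m)\leq\gamma'_{n,m}\}$ gives, under $H_1:P\neq Q$,
\[
\beta'_{n,m}=\mathbf{P}_{x^ny^m}(\mathcal{A}'(n,m))\leq \mathbf{P}_{x^ny^m}\bigl(d_k(\hat P_n,\hat Q_m)\leq\gamma'_{n,m}\bigr),
\]
so taking $-\frac{1}{n+m}\log$ and passing to $\liminf$ transfers any lower bound on the exponent of the MMD test with threshold $\gamma'_{n,m}$ to a lower bound on the exponent of $\beta'_{n,m}$. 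It therefore suffices to show that this MMD test achieves exponent at least $D^*$ in the case $c\in(0,1)$ (respectively at least $D(P\|Q)$ w.r.t.~$m$ in the case $n/m\to\infty$).

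Second, I would argue that the lower-bound part of the proof of Theorem~\ref{thm:mainresult1} carries over to any vanishing threshold, not just to the specific $\gamma_{n,m}$ of Eq.~(\ref{eqn:gammanm}). The argument there only uses $\gamma_{n,m}\to 0$: pick a constant $0<\gamma<d_k(P,Q)$, which is strictly positive because $k$ is characteristic and $P\neq Q$; by the weak metrizability in Theorem~\ref{thm:MMDmetrize}, $(P,Q)$ lies outside the closure of $\{(P',Q'):d_k(P',Q')\leq\gamma\}$ w.r.t.~pairwise weak convergence; and since $\gamma'_{n,m}\to 0$, for large enough $n,m$ one has $\{(P',Q'):d_k(P',Q')\leq\gamma'_{n,m}\}\subset\{(P',Q'):d_k(P',Q')\leq\gamma\}$. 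Applying the extended Sanov's theorem to the containing set then yields $\liminf_{n,m}-\frac{1}{n+m}\log\mathbf{P}_{x^ny^m}(d_k(\hat P_n,\hat Q_m)\leq\gamma'_{n,m})\geq D^*$, which combined with the monotonicity step gives the first assertion. For the second case, \cite[Theorem~4]{Zhu2018UHT} provides the analogous exponent $D(P\|Q)$ w.r.t.~the smaller sample size $m$, and since its proof likewise depends only on the vanishing of the threshold with respect to $m$, the same substitution-and-monotonicity argument applies.

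The only substantive thing to verify is that the lower-bound arguments in Theorem~\ref{thm:mainresult1} and in \cite{Zhu2018UHT} are genuinely agnostic to the \emph{rate} at which the threshold vanishes; inspection confirms this, since once the shrinking threshold is replaced by a fixed $\gamma$ via the eventual-containment step, no further use is made of the particular form of $\gamma_{n,m}$. Beyond this check, the proof is essentially bookkeeping, and I do not foresee any real obstacle — the heavy lifting (the extended Sanov's theorem for pairs of empirical measures and its coupling with the weak metrizability of $d_k$) has already been carried out upstream.
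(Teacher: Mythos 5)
Your proposal is correct and matches the paper's own (very brief) justification: the paper derives Theorem~\ref{thm:lowerbd} directly from the monotonicity observation $\mathbf P_{x^ny^m}(\A'(n,m))\leq\mathbf P_{x^ny^m}(d_k(\hat P_n,\hat Q_m)\leq\gamma'_{n,m})$ together with the lower-bound halves of Theorem~\ref{thm:mainresult1} and Eq.~(\ref{eqn:diff1}), exactly as you do. Your additional check that those lower-bound arguments use only $\gamma_{n,m}\to0$ (via the eventual containment in a fixed closed ball $\{d_k\leq\gamma\}$ and the limit $D^*_\gamma\to D^*$) is the right thing to verify and is indeed all that is needed.
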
	

The above theorem characterizes only the type-II error exponent. A suitable threshold is needed to guarantee the test be level $\alpha$ for practical use. Our next result provides an upper bound on the optimal type-II error exponent of any (asymptotically) level $\alpha$ test.

\begin{theorem}
	\label{thm:upperbd}
	Let $x^n$, $y^m$, $P$, $Q$, and $D^*$ be defined as in Theorem~\ref{thm:lowerbd}.  For a  test $\A'(n,m)$ which is (asymptotically) level $\alpha,0<\alpha<1$, its type-II error probability $\beta'_{n,m}$ satisfies \[\liminf_{n,m\to\infty}-\frac{1}{n+m}\log\beta'_{n,m}\leq D^*,\]
	if~$0<\lim_{n,m\to\infty}\frac{n}{n+m}=c<1$ and $0<D^*<\infty$; and 
	\[\liminf_{n,m\to\infty}-\frac{1}{m}\log\beta'_{n,m}\leq D(P\|Q),\]
if $\lim_{n,m\to\infty}\frac nm=\infty$ and $0<D(P\|Q)<\infty$.
\end{theorem}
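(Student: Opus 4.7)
The plan is to establish a Chernoff–Stein-type converse by reducing the composite null $H_0:P=Q$ to a simple null chosen near the infimum defining $D^*$, and then applying an AEP/typical-set argument to the log-likelihood ratio between the reduced null and alternative product measures. Consider first the regime $0<c<1$. Given $\epsilon>0$, pick $R_\epsilon\in\mathcal P$ with $cD(R_\epsilon\|P)+(1-c)D(R_\epsilon\|Q)\leq D^*+\epsilon$; since $D^*<\infty$ and $c\in(0,1)$, both individual KL terms are finite, so $R_\epsilon\ll P$ and $R_\epsilon\ll Q$. Let $\mu_0:=R_\epsilon^{\otimes n}\otimes R_\epsilon^{\otimes m}$, a valid joint distribution under $H_0$ (take $P=Q=R_\epsilon$), and $\mu_1:=P^{\otimes n}\otimes Q^{\otimes m}$. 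Because $\mathcal A'(n,m)$ is (asymptotically) level $\alpha\in(0,1)$, for any fixed $\alpha'\in(\alpha,1)$ we have $\mu_0(\mathcal A'(n,m))\geq 1-\alpha'$ for all sufficiently large $n,m$.

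Next I would apply the weak law of large numbers to the log-likelihood ratio
\[
L_{n,m}(x^n,y^m):=\log\frac{d\mu_0}{d\mu_1}(x^n,y^m)=\sum_{i=1}^n\log\frac{dR_\epsilon}{dP}(x_i)+\sum_{j=1}^m\log\frac{dR_\epsilon}{dQ}(y_j).
\]
Under $\mu_0$, the two sums have per-sample means $D(R_\epsilon\|P)$ and $D(R_\epsilon\|Q)$, so $(n+m)^{-1}L_{n,m}$ converges in $\mu_0$-probability to $cD(R_\epsilon\|P)+(1-c)D(R_\epsilon\|Q)\leq D^*+\epsilon$. Define the typical set $T_{n,m}:=\{L_{n,m}\leq (n+m)(D^*+2\epsilon)\}$; then $\mu_0(T_{n,m})\to 1$, and on $T_{n,m}$ we have $d\mu_0/d\mu_1\leq e^{(n+m)(D^*+2\epsilon)}$. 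Combining this with $\mu_0(\mathcal A'(n,m)\cap T_{n,m})\geq 1-\alpha'-o(1)$ yields
\[
\beta'_{n,m}=\mu_1(\mathcal A'(n,m))\geq\mu_1(\mathcal A'(n,m)\cap T_{n,m})\geq (1-\alpha'-o(1))\,e^{-(n+m)(D^*+2\epsilon)},
\]
so $\liminf_{n,m\to\infty}-\frac{1}{n+m}\log\beta'_{n,m}\leq D^*+2\epsilon$; sending $\epsilon\downarrow 0$ closes the first case.

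For the regime $n/m\to\infty$ with $0<D(P\|Q)<\infty$, I would run exactly the same argument with $R_\epsilon:=P$, so $\mu_0=P^{\otimes n}\otimes P^{\otimes m}$ and $L_{n,m}$ collapses to $\sum_{j=1}^m\log\frac{dP}{dQ}(y_j)$, whose $P^{\otimes m}$-expectation equals $mD(P\|Q)$. The LLN then gives $m^{-1}L_{n,m}\to D(P\|Q)$ in $P^{\otimes m}$-probability, and the analogous typical-set inequality produces $\beta'_{n,m}\geq(1-\alpha'-o(1))e^{-m(D(P\|Q)+\epsilon)}$, hence $\liminf -\frac{1}{m}\log\beta'_{n,m}\leq D(P\|Q)$.

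The main obstacle is producing a well-behaved near-minimizer in the first case: to make $L_{n,m}$ a.s.\ finite under $\mu_0$ we need $\mu_0\ll\mu_1$, i.e., $R_\epsilon\ll P$ and $R_\epsilon\ll Q$. This is automatic because $cD(R_\epsilon\|P)+(1-c)D(R_\epsilon\|Q)\leq D^*+\epsilon<\infty$ with $c\in(0,1)$ forces both individual divergences to be finite. One should also verify that the definition of "asymptotically level $\alpha$" permits passing to an $\alpha'\in(\alpha,1)$ so that $1-\alpha'$ does not vanish in the prefactor; since $\alpha<1$ this is immediate. The remaining steps are a direct adaptation of the standard Chernoff–Stein converse to the two-sample product-measure setting.
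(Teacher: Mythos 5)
Your proof is correct and follows essentially the same route as the paper's: a Chernoff--Stein converse that reduces the composite null to the simple null $P=Q=P'$ at (or, in your case, within $\epsilon$ of) the minimizer of $cD(\cdot\|P)+(1-c)D(\cdot\|Q)$, restricts to a typical set for the log-likelihood ratio, and applies the change-of-measure bound together with the level constraint. The only differences are cosmetic --- you use a near-minimizer $R_\epsilon$ (which neatly sidesteps the existence of an exact minimizer that the paper asserts) and a single joint typical set for $L_{n,m}$ instead of the paper's product of two one-sided typical sets $A_n\times B_m$.
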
	
\begin{proof} Let $P'$ be such that $cD(P'\|P)+(1-c)D(P'\|Q)=D^*$ for $0<c<1$. Define $A_n=\{x^n:|\frac{1}{n}\log\frac{dP'(x^n)}{dP(x^n)}- D(P'\|P)|\leq\epsilon\}$, and
$B_m= \{y^m:|\frac{1}{m}\log\frac{dP'(y^m)}{dQ(y^m)}-D(P'\|Q)|\leq\epsilon\}$, where $\epsilon>0$ is fixed and can be arbitrary. Here ${dP'}/{dP}$ and ${dP'}/{dQ}$ are Radon-Nikodym derivatives and exist by the finiteness of $D^*$. Consider the acceptance region $A_n\times B_m\cap\A'(n,m)$, from which we can obtain an upper bound $D^*+\epsilon$ on the type-II error exponent of $\A'(n,m)$. Since $\epsilon$ can be arbitrarily small, then $D^*$ is an upper bound on the type-II error exponent. When $c=1$, we can set $P'=P$ and apply the above argument; alternatively, we may compare the test with the optimal goodness-of-fit test in \cite{Zhu2018UHT} and use Stein's lemma \cite{Dembo2009} to establish the upper bound. See Appendix~\ref{sec:optimality} for details. 
\end{proof}

This theorem shows that the kernel test $d_k(\hat P_n,\hat Q_m)\leq \gamma_{n,m}$ is an optimal level $\alpha$ two-sample test, by choosing the type-II error exponent as the asymptotic performance metric. Moreover, Theorems~\ref{thm:lowerbd} and \ref{thm:upperbd} together provide a way of finding more asymptotically optimal two-sample tests:
\begin{itemize}
	\item An unbiased estimator of the squared MMD, denoted by $\operatorname{MMD}_u^2$, is also proposed in \cite{Gretton2012}. The test $\operatorname{MMD}_u^2\leq(4K/\sqrt{n})\sqrt{\log(\alpha^{-1})}$ is a level $\alpha$ test, assuming $n=m$. As $k(\cdot,\cdot)$ is finitely bounded by $K$, we have $|\mathrm{MMD}_u^2-\mathrm{MMD}_b^2|\leq2K/n$ and the acceptance region of the unbiased test is a subset of $\operatorname{MMD}_b^2\leq(4K/\sqrt{n})\sqrt{\log(\alpha^{-1})}+2K/n$. Then its type-II error probability vanishes exponentially at a rate of $\inf_{R\in\mathcal P}\frac{1}{2}D(R\|P)+\frac12D(R\|Q)$.
	
	
	\item It is also possible to consider a family of kernels for the test statistic \cite{Fukumizu2009,Sriperumbudur2016EstPM}. For a given family $\kappa$, the test statistic is $\sup_{k\in\kappa} d_k(\hat P_n, \hat Q_m)$ which also metrizes weak convergence under suitable conditions, e.g., when $\kappa$ consists of finitely many Gaussian kernels \cite[Theorem~3.2]{Sriperumbudur2016EstPM}. If $K$ remains to be an upper bound for all $k\in\kappa$, then comparing $\sup_{k\in\kappa} d_k(\hat P_n, \hat Q_m)$ with $\gamma_{n,m}$ in Eq.~(\ref{eqn:gammanm}) results in an asymptotically optimal level $\alpha$ test.
\end{itemize}
\subsection{Discussions}
\label{sec:discussion}
\paragraph{Fair alternative} In \cite{Ramdas2015}, a notion of fair alternative is proposed for two-sample testing as dimension increases, which is to fix $D(P\|Q)$ under the alternative hypothesis for all dimensions. This idea is guided by the fact that the KLD is a fundamental information-theoretic quantity  determining the hardness of hypothesis testing problems. This approach, however, does not take into account the impact of sample sizes. In light of our results, perhaps a better choice is to fix $D^*$ in Theorem~\ref{thm:mainresult1} when the sample sizes grow in the same order. In practice, $D^*$ may be hard to compute, so fixing its upper bound $(1-c)D(P\|Q)$ and hence $D(P\|Q)$ is reasonable. 


\paragraph{Kernel choice} The main results indicate that the type-II error exponent is independent of kernels as long as they are bounded continuous and characteristic. We remark that this indication does not contradict previous studies on kernel choice, as the sub-exponential term can dominate in the finite sample regime. In light of the exponential consistency, it then raises interesting connections with a kernel selection strategy, where part of samples are used as training data to choose a kernel and the remaining samples are used with the selected kernel to compute the test statistic \cite{Gretton2012OptKernelLarge,Sutherland2017GeneandCrit}. On the one hand, the sample size should not be too small so that there are enough data for training. On the other hand, if the number of samples is large enough and the exponential decay term becomes dominating, directly using the entire samples may be good enough to have a low type-II error probability, provided that kernel is not too poor. This point will be further illustrated by experiments in Section~\ref{sec:exp}.

\paragraph{Threshold choice} As also discussed in \cite{Zhu2018UHT}, the distribution-free threshold, $\gamma_{n,m}$ in Eq.~(\ref{eqn:gammanm}), is  loose in general \cite{Gretton2012}. In practice, the threshold can be computed based on some estimate of the null distribution from the given samples, such as a bootstrap procedure and using the eigenspetrum of the Gram matrix on the aggregate sample \cite{Gretton2009,Gretton2012}. While these approaches can meet the level constraint in the large sample limit, they however bring additional randomness on the threshold and further on the type-II error probability. Similar to \cite{Zhu2018UHT},  we can take the minimum of such a threshold and the distribution-free one to achieve the optimal type-II error exponent, while the type-I error constraint holds in the asymptotic sense, i.e., $\lim_{n,m\to\infty}\alpha_{n,m}\leq\alpha$.

\paragraph{Other discrepancy measures} Other distance measures between distributions may also metrize the weak convergence on $\mathcal P$, such as L\'evy-Prokhorov metric, bounded Lipschitz metric, and Wasserstein distance. If we directly compute such a distance between the empirical measures and compare it with a decreasing threshold, the obtained test would have the same optimal type-II error exponent as in Theorem~\ref{thm:lowerbd}.  However, unlike Lemma~\ref{lem:Gretton} for the MMD based statistic, there does not exist a uniform or distribution-free threshold such that the level constraint is satisfied for all sample sizes. Similar to the kernel Stein discrepancy based goodness-of-fit test in \cite{Zhu2018UHT}, a possible remedy  is to relax  the level constraint to an asymptotic one, but a uniform characterization on the decay rate of the estimated distance is still required. We will not expand into this direction, because computing such distance measures from samples is generally more costly than the MMD based statistics. 

\section{Application to off-line change detection}
\label{sec:app}
In this section, we apply our results to the off-line change detection problem. 

Let $z_1,\ldots,z_{n}\in\mathbb{R}^d$ be an independent sequence of observations. Assume that there is at most one change-point at index $1<t<n$, which, if exists, indicates that $z_i\sim P, 1\leq i\leq t$ and $z_i\sim Q, t+1\leq i\leq n$ with $P\neq Q$. The off-line change-point analysis consists of two steps: 1) detect if there is a change-point in the sample sequence; 2) estimate the index $t$ if such a change-point exists. Notice that a method may readily extend to multiple change-point and on-line settings, through sliding windows running along the sequence, as in \cite{Desobry2005onlinekernel, Harchaoui2009KernelChange,Li2015Changedetection}.

The first step in the change-point analysis is usually formulated as a hypothesis testing problem: 
\begin{align}
H_0:~&z_i \sim P, i=1,\ldots,n, \nn\\
H_1:~&\text{there exists}~1<t<n~\text{such that}\nn\\
&z_i\sim P, 1\leq i\leq t~\text{and}~z_i\sim Q\neq P, t+1\leq i\leq n.\nn
\end{align}
Let $\hat P_{i} $ and $\hat Q_{n-i}$ denote the empirical measures of sequences $z_1,\ldots, z_i$ and $z_{i+1},\ldots,z_n$, respectively. Then an MMD based test can be directly constructed using the  maximum partition strategy:
\[\text{decide}~H_0,~\text{if}~\max_{a_n\leq i\leq b_n}d_k(\hat P_{i}, \hat Q_{n-i})\leq\gamma_n,\]
where the maximum is searched in the interval $[a_n, b_n]$ with $a_n > 1$ and $b_n < n$. If the test favors $H_1$, we can proceed to estimate the change-point index by $\operatorname{argmax}_{a_n\leq i\leq b_n} d_k(\hat P_i, \hat Q_{n-i})$. Here we characterize the performance of detecting the presence of a change for this test, using Theorems~\ref{thm:mainresult1} and \ref{thm:upperbd}. We remark that the assumptions on the search interval and on the change-point index in the following theorem are standard practice in this setting \cite{Basseville1993detectionchanges,Desobry2005onlinekernel, Harchaoui2009KernelChange,James1987testsforchange,Li2015Changedetection}.

\begin{theorem}
\label{thm:changedetection}
Let $a_n/n\to u>0$ and $b_n/n\to v <1$ as $n\to\infty$. Under the alternative hypothesis $H_1$, assume that the change-point index $t$ satisfies $u<\lim_{n\to\infty}t/n=c<v$, and that $0< D^*<\infty$ where $D^*$ is defined in Theorem~\ref{thm:mainresult1}. Further assume that the kernel $k$ satisfies \hyperref[thm:MMDmetrize]{\bf A2}, with $K>0$ being an upper bound. Given $0<\alpha<1$, set $c_{\min}=\min\{a_n(n-a_n), b_n(n-b_n)\}$ and $\gamma_n=\sqrt{{2K}/{a_n}}+\sqrt{2K/b_n}+\sqrt{2Kn\log(2n\alpha^{-1})/{c_{\min}}}$. Then the test $\max_{a_n\leq i\leq b_n}d_k(\hat P_{i}, \hat Q_{n-i})\leq\gamma_n$ is level $\alpha$ and also achieves the optimal type-II error exponent, that is, 
\[\alpha_{n}\leq \alpha,~\text{and}~\liminf_{n\to\infty}-\frac{1}{n}\log \beta_n=D^*,\]
where $\alpha_{n}$ and $\beta_n$ are the type-I and type-II error probabilities, respectively.
\end{theorem}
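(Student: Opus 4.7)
The plan is to dispatch the level guarantee and the two inequalities $\liminf -\frac{1}{n}\log\beta_n \geq D^*$ and $\liminf -\frac{1}{n}\log\beta_n \leq D^*$ separately, reducing each to a result already established in the paper: Lemma~\ref{lem:Gretton} for the level constraint, Theorem~\ref{thm:mainresult1} for the lower bound on the exponent, and Theorem~\ref{thm:upperbd} for the matching upper bound.

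For the type-I control, under $H_0$ every $z_i$ is i.i.d.\ from $P$, so for each fixed $i\in[a_n,b_n]$ both $\hat P_i$ and $\hat Q_{n-i}$ are empirical measures of the same distribution $P$. Apply Lemma~\ref{lem:Gretton} with sample sizes $(i,n-i)$: choosing $\epsilon_i = \sqrt{2Kn\log(2n\alpha^{-1})/(i(n-i))}$ makes the right-hand side at most $\alpha/n$. The deterministic prefactor $2\sqrt{K/i}+2\sqrt{K/(n-i)}$ together with $\epsilon_i$ is dominated by $\gamma_n$ uniformly in $i\in[a_n,b_n]$, using $a_n\leq i\leq b_n$ and $i(n-i)\geq c_{\min}$. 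A union bound over the at most $n$ candidate values of $i$ then yields $\alpha_n \leq \alpha$.

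For the lower bound on the type-II exponent, the key observation is that under $H_1$ the true change-point $t$ eventually lies in $[a_n,b_n]$, so $\max_{a_n\leq i\leq b_n} d_k(\hat P_i,\hat Q_{n-i}) \geq d_k(\hat P_t,\hat Q_{n-t})$ and hence $\beta_n \leq \mathbf{P}(d_k(\hat P_t,\hat Q_{n-t}) \leq \gamma_n \mid H_1)$. Under $H_1$, the pair $(\hat P_t,\hat Q_{n-t})$ consists of independent empirical measures of $t$ samples from $P$ and $n-t$ samples from $Q$, with $t/n\to c\in(0,1)$. Since $a_n\to\infty$ and $c_{\min}=\Theta(n^2)$, one checks $\gamma_n\to 0$, and the proof of Theorem~\ref{thm:mainresult1} --- which only required a vanishing threshold plus the extended Sanov's theorem --- applies verbatim to yield $\liminf -\frac{1}{n}\log\mathbf{P}(d_k(\hat P_t,\hat Q_{n-t})\leq\gamma_n\mid H_1) \geq D^*$.

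For the matching upper bound, reinterpret the change-detection procedure as a nonparametric two-sample test: given independent samples $x^{n_1}\sim P$ and $y^{n_2}\sim Q$ with $n_1/(n_1+n_2)\to c$, their concatenation $z^{n_1+n_2}$ has exactly the distribution of the observations under $H_1$ with change-point $t=n_1$, while the two-sample null $P=Q$ is a special case of the change-detection $H_0$. Thus the test is an asymptotically level $\alpha$ two-sample test, and Theorem~\ref{thm:upperbd} immediately delivers $\liminf -\frac{1}{n}\log\beta_n \leq D^*$. The main obstacle is the type-I step: $\gamma_n$ must simultaneously be small enough to vanish (so that the vanishing-threshold arguments of Theorems~\ref{thm:mainresult1} and \ref{thm:upperbd} apply) and large enough to control an $O(n)$-term union bound, which is why the $\log n$ factor inside $\gamma_n$ is essential and needs $c_{\min}=\Theta(n^2)$ --- guaranteed precisely by $u>0$ and $v<1$.
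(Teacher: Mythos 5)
Your proposal is correct and follows essentially the same route as the paper: a union bound over the at most $n$ split points combined with Lemma~\ref{lem:Gretton} for the level constraint, and a reduction to the known-change-point two-sample problem (noting $\gamma_n\to 0$ and that the acceptance region is contained in that of the test at split $t$) so that Theorems~\ref{thm:mainresult1} and~\ref{thm:upperbd} give the matching lower and upper bounds on the exponent. Your write-up is in fact more explicit than the paper's about why the max-statistic dominates the single-split statistic and why the procedure qualifies as an asymptotically level $\alpha$ two-sample test for the upper bound.
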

\begin{proof}
Since $\mathbf P_{z^n}(\max_{a_n\leq i\leq b_n}d_k(\hat P_{i}, \hat Q_{n-i})>\gamma_n)\leq \sum_{a_n\leq i\leq b_n}\mathbf P_{z^n}(d_k(\hat P_{i}, \hat Q_{n-i})>\gamma_n)$, it suffices to make each $\mathbf P_{z^n}(d_k(\hat P_{i}, \hat Q_{n-i})>\gamma_n)\leq \alpha/n$ under the null hypothesis $H_0$. This can be verified using Lemma~\ref{lem:Gretton} with the choice of $\gamma_n$  in the above theorem. To see the optimal type-II error exponent, consider a simpler problem where the possible change-point $t$ is known, i.e., a two-sample problem between $z_1,\ldots,z_t$ and $z_{t+1},\ldots,z_n$. Since $\gamma_n\to0$ as $n\to\infty$, applying Theorems~\ref{thm:mainresult1} and \ref{thm:upperbd} establishes the optimal type-II error exponent.
\end{proof}
\section{Experiments}
\label{sec:exp}
This section presents empirical results to validate our previous findings. We begin with a toy example to demonstrate the exponential consistency, and then consider how kernel choice and sample sizes affect the type-II error probability. We set equal sample sizes, i.e., $n=m$, and pick the significance level $\alpha=0.05$ in all experiments.

\paragraph{Exponential consistency} 
\begin{wrapfigure}{r}{0.4\textwidth}
	\vspace{-1.5em}
	\includegraphics[width=\linewidth]{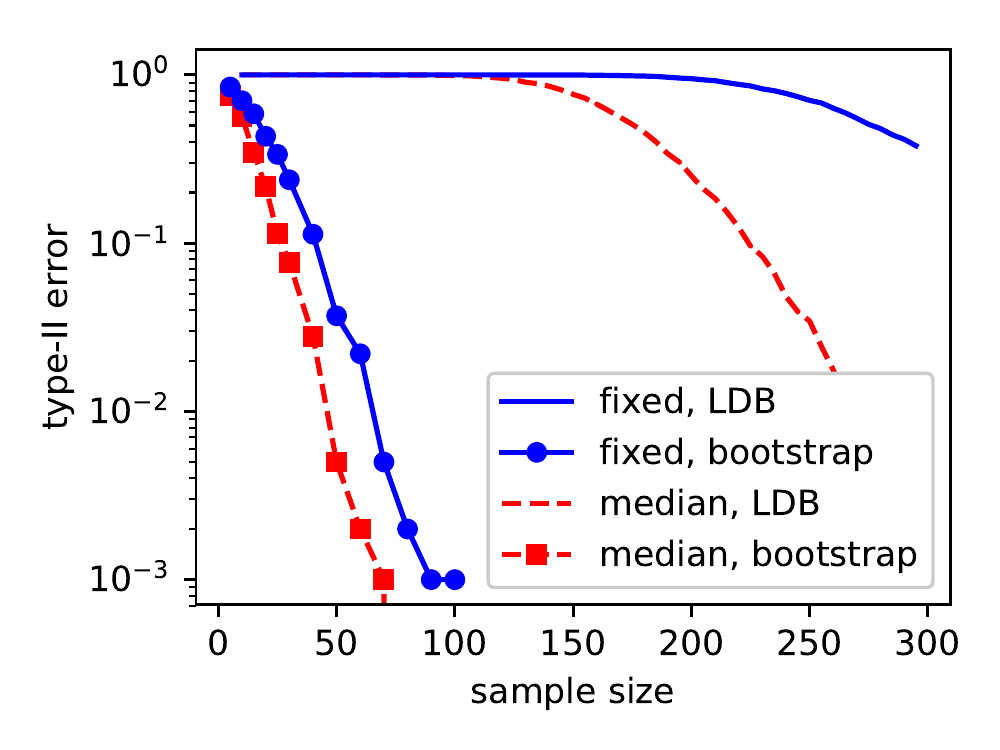}
	\caption{2D Gaussians with different means.} 
	\label{fig:pes_simple}
\end{wrapfigure} 
While there have been various experiments on the type-II error probability, the exponential decay behavior  has been scarcely reported. To this end, we perform a simple experiment and display the type-II error probability in the logarithm scale. Let $x^n~\text{i.i.d.}~\sim \mathcal{N}({\mu}_P, {I})$ and $y^m~\text{i.i.d.}~\sim \mathcal{N}({\mu}_Q, {I})$, where $\mu_P=[0.25, 0.25]^T$, ${\mu}_Q=[1,1]^T$, and $I$ is the $2\times2$ identity matrix. We use the biased test statistic $d_k(\hat P_n,\hat Q_m)$ with Gaussian kernel $k(x,y)=\exp(-\|x-y\|_2^2/w)$. A fixed choice of $w=1$ and the median heuristic are employed for the kernel bandwidth.  We also consider two threshold choices: one is from the Large Deviation Bound (LDB), given in Eq.~(\ref{eqn:gammanm}); and the other is from a bootstrap method in \cite{Gretton2012}, with $1000$ bootstrap replicates. We repeat $1000$ trials and report the result in Figure~\ref{fig:pes_simple}.

We observe that all the type-II error probabilities exhibit an exponential decay rate as the sample number increases. The LDB threshold is quite conservative and the error probability starts decaying with much more samples. Although the main theorems in Section~\ref{sec:mainresults} do not include the median bandwidth, the figure shows that it also leads to an exponential decay of the type-II error probability. This might be because the median distance lies within a small neighborhood of some fixed bandwidth in this experiment, hence behaving similarly.



\paragraph {Kernel choice vs.~Sample size} Following the discussions in Section~\ref{sec:discussion}, we investigate how kernel choice and sample number affect the test performance. We consider Gaussian kernels that are determined by their bandwidths. \citet{Sutherland2017GeneandCrit} use part of samples as training data to select the bandwidth, which we call the trained bandwidth. The estimated MMD is then computed using the trained bandwidth and the remaining samples. 

For the first experiment, we take a similar setting from  \cite{Sutherland2017GeneandCrit}: $P$ is a $3\times 3$ grid of 2D standard normals, with spacing $10$ between the centers; $Q$ is laid out identically, but with covariance $\frac{\epsilon-1}{\epsilon+1}$ between the
coordinates. Here we pick $\epsilon=6$ and generate $n=m=720$ samples from each distribution. We pick splitting ratios $r=0.25$ and $r=0.5$ for computing the trained bandwidth. Correspondingly, there are $n=m=540$ and $n=m=360$ samples used to calculate the test statistic, respectively. For each case with $n=m\in\{360, 540, 720\}$, we report in Figure~\ref{fig:pes2_3x3} the type-II error probabilities over different bandwidths, averaged over $200$ trials. The unbiased test statistic $d_u^2(\hat P_n,\hat Q_m)$ is used and the test threshold is obtained using bootstrap with $500$ permutations. We also mark the trained bandwidths corresponding to the respective sample sizes in the figure (red star marker). 

Figure~\ref{fig:pes2_3x3} verifies that the trained bandwidth is close to the optimal one in terms of the type-II error probability. Moreover, it indicates that a large range of bandwidths lead to lower or comparable error probabilities if we directly use the entire samples for testing. As the sample number increases, the exponential decay term in the type-II error probability becomes dominating and the effect of kernel choice diminishes. However, since the desired range of bandwidths  is not known in advance, an interesting question is when we shall split data for kernel selection and what is a proper splitting ratio.


\begin{figure}[t!]
\centering
\begin{minipage}[t]{0.33\linewidth}
	\includegraphics[width=\linewidth]{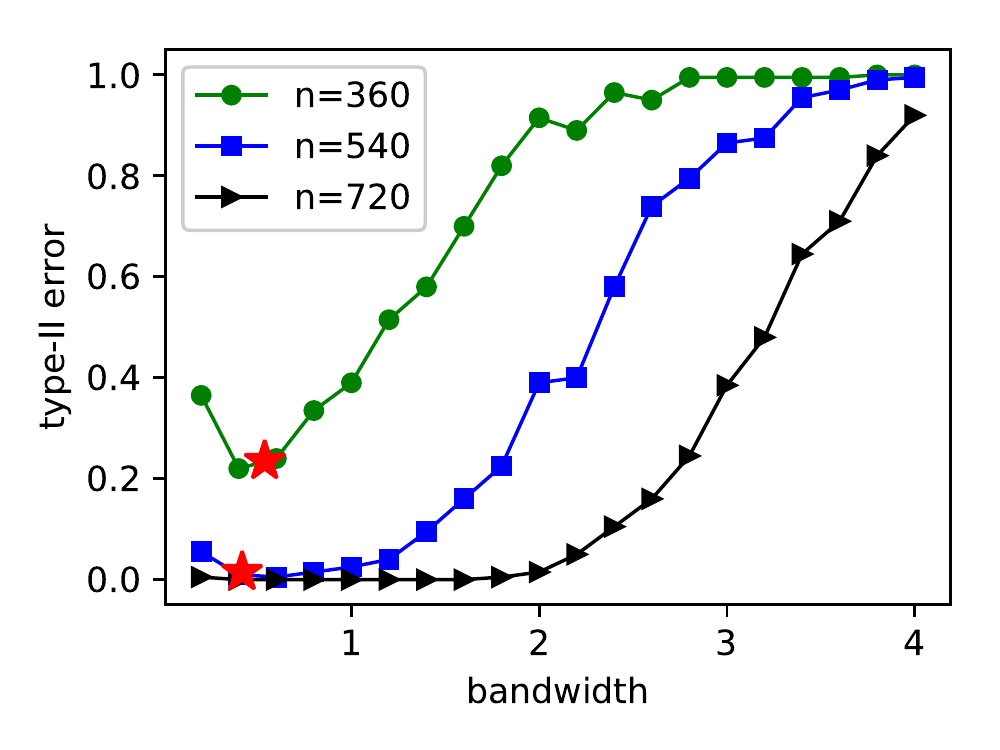}
	\subcaption{}
	\label{fig:pes2_3x3}
\end{minipage}%
\begin{minipage}[t]{0.33\linewidth}
	\includegraphics[width=\linewidth]{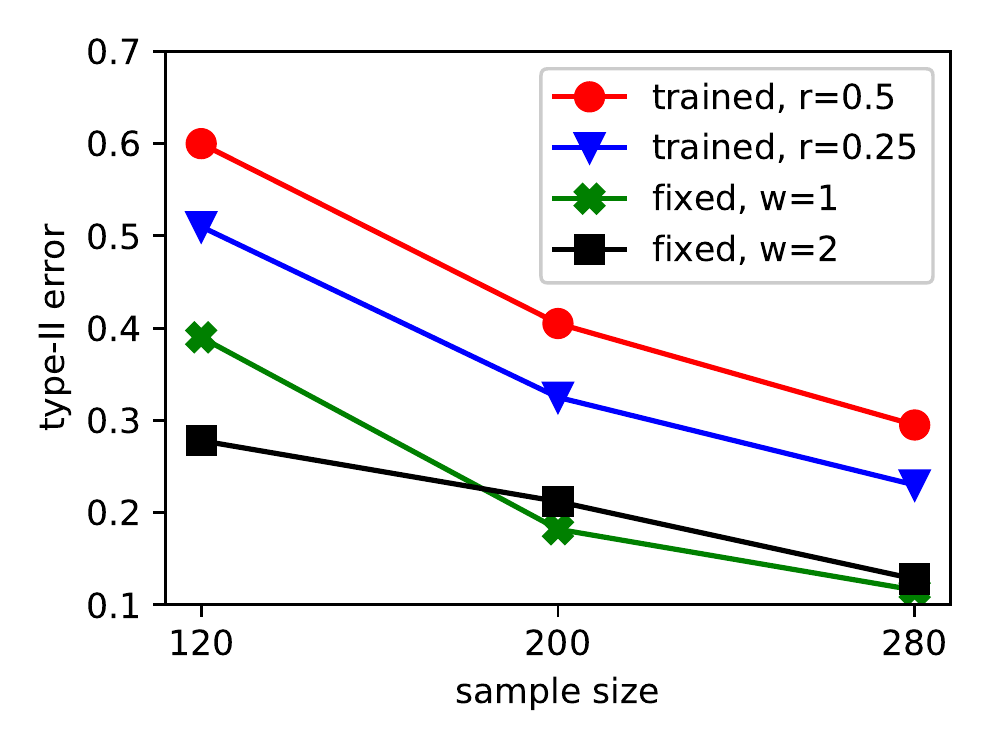}
	\subcaption{}
	\label{fig:pes2_MOG}
\end{minipage}%
\begin{minipage}[t]{0.33\linewidth}
	\includegraphics[width=\linewidth]{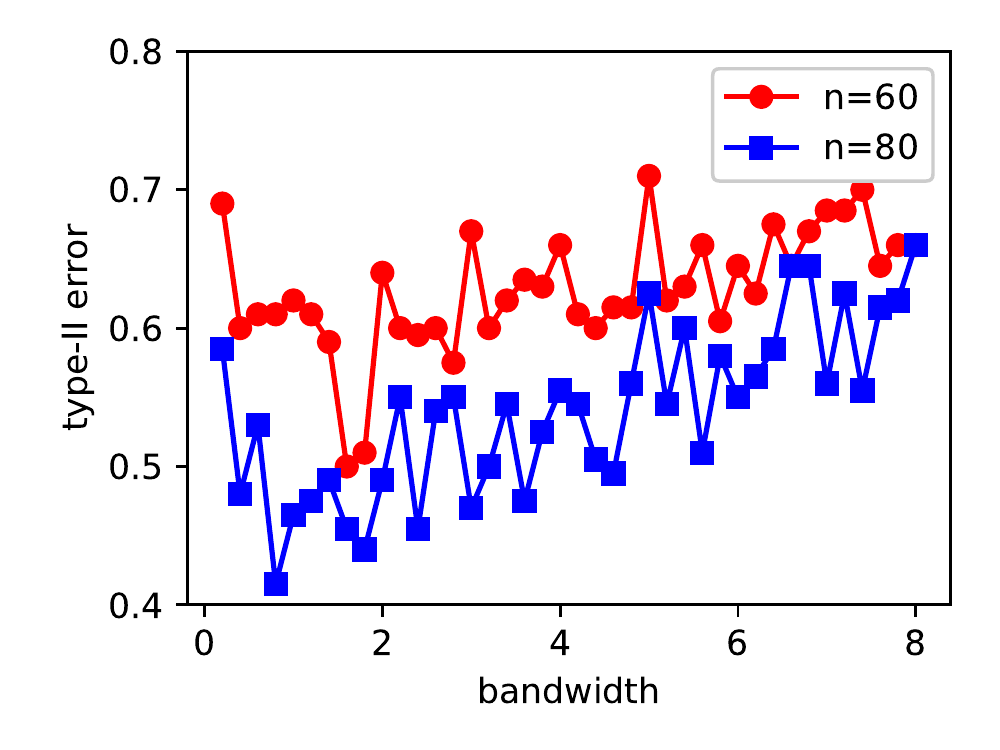}
	\subcaption{}
	\label{fig:pes2_MOG_grid}
\end{minipage}
\caption{Kernel choice vs.~Sample size. {\bf (a)} $3\times 3$ grid of 2D standard normals. Red star denotes the trained bandwidth.  {\bf (b-c)} 1D Gaussian mixture.}
\end{figure}

In the second experiment, we directly use the setup in \cite{Liu2016GoodnessFit}. We draw $x^n~\text{i.i.d.}\sim\sum_{k=1}^5 a_k\mathcal N(\mu_k,\sigma^2)$ with $a_k=1/5$, $\sigma^2=1$, and $\mu_k\sim\operatorname{Uniform}[0,10]$, and then generate $y^m$ by adding standard Gaussian noise (perturbation) to $\mu_k$. We consider splitting ratios $r=0.25$ and $r=0.5$ of the entire samples used as training data and compute $d_k^2(\hat P_n,\hat Q_m)$ based on the rest samples. For comparison, the kernel tests with fixed bandwidths $w=1$ and $w=2$ are also evaluated, which estimate the MMD based on the entire samples. All the test  thresholds are computed using bootstrap with $500$ replicates. We repeat $500$ trials and report the type-II error probabilities in Figure~\ref{fig:pes2_MOG}. It shows that the more samples we use to compute the test statistic, the lower type-II error probability we get; in other words, kernel choice is less important than the sample size for this setting. This point is further illustrated in Figure~\ref{fig:pes2_MOG_grid} where we show the type-II error probabilities of $n=m=60$ and $n=m=80$ samples over different kernel bandwidths. The kernel selection strategy in \cite{Sutherland2017GeneandCrit} does not perform well in this experiment, which also motivates future studies on when to use such a kernel selection strategy.

\section{Conclusion}
\label{sec:conclusion}
In this paper, a class of kernel two-sample tests are shown to exponentially consistent and to attain the optimal type-II error exponent, provided that kernels are bounded continuous and characteristic. A notable implication is that kernels affect only the sub-exponential term in the type-II error probability. We apply our results to off-line change detection and show that a test achieves the optimal detection in the nonparametric setting. Finally, we empirically investigate how kernel choice and sample size affect the test performance. 
\newpage
{\small 
\bibliography{SZhuBib}
\bibliographystyle{abbrvnat}}
\newpage
\appendix
\section*{Appendix}

\section{Proof of the extended Sanov's theorem}
\label{sec:extendedSanov}
We first prove the result with a finite sample space and then extend it to the case with general Polish space. The prerequisites are two combinatorial lemmas that are standard tools in information theory.

For a positive integer $t$, let $\mathcal P_n(t)$ denote the set of probability distributions defined on $\{1,\ldots, t\}$ of form $P=\left(\frac{n_1}n,\cdots,\frac{n_t}n\right)$, with integers $n_1,\ldots, n_t$. Stated below are the two lemmas.
\begin{lemma}[{\cite[Theorem~11.1.1]{Cover2006}}]
	\label{lem:numEmpDistribution}
	$|\mathcal P_n(t)|\leq(n+1)^t.$
\end{lemma}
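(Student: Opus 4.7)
The plan is to reduce the count of empirical-type distributions to a count of integer tuples, and then give a coordinate-wise upper bound that ignores the simplex constraint. Concretely, every $P \in \mathcal P_n(t)$ is completely determined by the tuple $(n_1,\ldots,n_t)$ appearing in its representation $P = (n_1/n,\ldots,n_t/n)$, so the map $P \mapsto (n_1,\ldots,n_t)$ is injective and it suffices to bound the number of admissible tuples. The admissibility requirements are that each $n_i$ is a nonnegative integer and $\sum_{i=1}^t n_i = n$; in particular, each $n_i$ lies in $\{0,1,\ldots,n\}$, a set of cardinality $n+1$.

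I would then write the cardinality bound directly: since each of the $t$ coordinates has at most $n+1$ possible values, the product rule yields $|\mathcal P_n(t)| \leq (n+1)^t$. I would briefly remark that this discards the constraint $\sum n_i = n$ and is therefore loose (the sharper count is $\binom{n+t-1}{t-1}$ by stars and bars), but the stated product bound has the advantage of a clean form that is polynomial in $n$ for fixed $t$, which is the only feature actually used downstream when combined with the other combinatorial lemma inside the method-of-types proof of Sanov's theorem.

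There is essentially no obstacle here; the only thing to be careful about is the convention that $0 \leq n_i \leq n$ (nonnegativity is implicit from $P$ being a probability distribution, and $n_i \leq n$ follows from the sum constraint plus nonnegativity of the other components). Once this is spelled out, the one-line product bound closes the argument.
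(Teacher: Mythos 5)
Your proof is correct and is exactly the standard argument from the cited source (Cover and Thomas, Theorem 11.1.1), which the paper invokes without reproducing: bound each coordinate $n_i$ by its $n+1$ possible values and multiply. Nothing further is needed.
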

\begin{lemma}[{\cite[Theorem~11.1.4]{Cover2006}}]
	\label{lem:typeprob}
	Assume $x^n$ i.i.d.~$\sim Q$ where $Q$ is a distribution defined on $\{1,\ldots,t\}$. For any $P\in\mathcal P_n(t)$, the probability of the empirical distribution $\hat P_n$ of $x^n$ equal to $P$ satisfies	
	\[(n+1)^{-t}e^{-nD(P\|Q)}\leq \mathbf P_{x^n}(\hat P_n=P)\leq e^{-nD(P\|Q)}.\]
\end{lemma}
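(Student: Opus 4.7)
The goal is to bound the probability mass that the product measure $Q^{\otimes n}$ places on the set of sequences $T(P):=\{x^n:\hat P_n(x^n)=P\}$, where $P=(n_1/n,\ldots,n_t/n)\in\mathcal P_n(t)$. The plan is to (i)~compute the exact $Q^{\otimes n}$-probability of any single sequence in $T(P)$, (ii)~use a slick telescoping identity to rewrite this probability in terms of $H(P)$ and $D(P\|Q)$, and then (iii)~sandwich the cardinality $|T(P)|$ between two elementary combinatorial bounds.

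\textbf{Step 1: probability of one sequence in a type class.} For any fixed $x^n\in T(P)$, independence gives $Q^{\otimes n}(x^n)=\prod_{i=1}^t q_i^{n_i}$. Taking the logarithm and adding and subtracting $\sum_i n_i \log(n_i/n)$ produces
\[
Q^{\otimes n}(x^n)\;=\;\exp\!\bigl(-n\,H(P)-n\,D(P\|Q)\bigr),
\]
a quantity that depends only on the type $P$, not on the particular ordering. Summing over $|T(P)|$ sequences yields the identity
\[
\mathbf P_{x^n}(\hat P_n=P)\;=\;|T(P)|\,e^{-n(H(P)+D(P\|Q))}.
\]
All that remains is to bound $|T(P)|$ from above and below by $e^{nH(P)}$ up to the polynomial factor $(n+1)^t$.

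\textbf{Step 2: upper bound on $|T(P)|$.} Specialise the identity in Step~1 to $Q=P$: each sequence in $T(P)$ has $P^{\otimes n}$-probability $e^{-nH(P)}$. Since the total mass of $T(P)$ under $P^{\otimes n}$ is at most $1$, we immediately get $|T(P)|\le e^{nH(P)}$, which plugged into the identity of Step~1 gives the upper bound $e^{-nD(P\|Q)}$.

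\textbf{Step 3: lower bound via the ``type is its own mode'' property.} For the lower bound, I would prove that under $P^{\otimes n}$ the type class $T(P)$ is at least as likely as any other type class $T(P')$ with $P'\in\mathcal P_n(t)$. The clean route is to write
\[
\frac{P^{\otimes n}(T(P))}{P^{\otimes n}(T(P'))}\;=\;\prod_{i=1}^t\frac{(nP'_i)!}{(nP_i)!}\,P_i^{\,n(P_i-P'_i)},
\]
and apply the elementary factorial inequality $a!/b!\ge b^{\,a-b}$ (valid for all nonnegative integers $a,b$) to each factor. After simplification all the $P_i$ exponents cancel and the product collapses to $n^{\,n\sum_i(P'_i-P_i)}=n^0=1$. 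Hence $P^{\otimes n}(T(P))\ge P^{\otimes n}(T(P'))$ for every type $P'$. Combining this with Lemma~\ref{lem:numEmpDistribution}, which bounds the number of types by $(n+1)^t$, we get $P^{\otimes n}(T(P))\ge(n+1)^{-t}$, i.e.\ $|T(P)|\ge(n+1)^{-t}e^{nH(P)}$. Substituting back into the identity of Step~1 yields the lower bound $(n+1)^{-t}e^{-nD(P\|Q)}$.

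\textbf{Main obstacle.} Everything except Step~3 is bookkeeping. The one point that deserves care is the ``mode'' claim $P^{\otimes n}(T(P))\ge P^{\otimes n}(T(P'))$: the bare-hands factorial manipulation must be done carefully because both $nP_i$ and $nP'_i$ can be larger than the other (there is no global ordering between the two multi-indices), so the inequality $a!/b!\ge b^{a-b}$ has to be applied coordinatewise in the correct direction and the resulting exponents of $n$ and of $P_i$ summed to verify that they cancel. Once this identity is in hand, the two bounds of the lemma follow without further work.
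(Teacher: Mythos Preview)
The paper does not supply its own proof of this lemma: it is quoted verbatim as \cite[Theorem~11.1.4]{Cover2006} and used as a black-box combinatorial tool in the proof of the extended Sanov theorem. So there is no ``paper's proof'' to compare against.

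That said, your proposal is correct and is precisely the classical argument from Cover and Thomas. The identity in Step~1 and the upper bound in Step~2 are immediate. For Step~3, your factorial manipulation is right: with $m_i=nP_i$ and $m'_i=nP'_i$, one has
\[
\frac{P^{\otimes n}(T(P))}{P^{\otimes n}(T(P'))}=\prod_i\frac{m'_i!}{m_i!}\,P_i^{\,m_i-m'_i}\;\ge\;\prod_i m_i^{\,m'_i-m_i}P_i^{\,m_i-m'_i}=\prod_i n^{\,m'_i-m_i}=n^0=1,
\]
using $a!/b!\ge b^{\,a-b}$ coordinatewise. The only edge case you did not mention is $P_i=0$ with $P'_i>0$, but then $P^{\otimes n}(T(P'))=0$ and the inequality is trivial. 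The appeal to Lemma~\ref{lem:numEmpDistribution} then closes the argument exactly as in the cited reference.
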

\subsection{Finite sample space}	
\paragraph{Upper bound} Let $t$ denote the cardinality of $\mathcal X$. Without loss of generality, assume that $\inf_{(R,S)\in\operatorname{int}\Gamma} cD(R\|P)+(1-c) D(S\|Q)<\infty$. Hence, the open set $\operatorname{int}\Gamma$ is non-empty. As $0<\lim_{n,m\to\infty}\frac{n}{n+m}=c<1$, we can find $n_0$ and $m_0$ such that there exists $(P'_n,Q'_m)\in\operatorname{int}\Gamma\cap P_n(t)\times P_m(t)$ for all $n>n_0$ and $m>m_0$, and that $cD(P_n'\|P)+(1-c)D(Q_m'\|Q)\to\inf_{(R,S)\in\operatorname{int}\Gamma} cD(R\|P)+(1-c) D(S\|Q)$ as $n,m\to\infty$. Then we have, with $n>n_0$ and $m>m_0$,
\begin{align}
\mathbf{P}_{x^ny^m}((\hat P_n, \hat Q_m)\in\Gamma) 
&=\sum_{(R,S)\in\Gamma\,\cap\, \mathcal{P}_{n}(t)\times\mathcal P_m(t)} \mathbf{P}_{x^ny^m}(\hat P_n=R, \hat Q_m=S)\nn\\
&\geq\sum_{(R,S)\in\operatorname{int}\Gamma\,\cap\, \mathcal{P}_{n}(t)\times\mathcal P_m(t)} \mathbf{P}_{x^ny^m}(\hat P_n=R, \hat Q_m=S)\nn\\
&\geq \mathbf{P}_{x^ny^m}(\hat P_n=P_n', \hat{Q}_m=Q_m')\nn\\
&= \mathbf P_{x^n}(\hat P_n=P'_n)\,\mathbf P_{y^m}(\hat Q_m=Q'_m)\nn\\
&\geq(n+1)^{-t}(m+1)^{-t}e^{-nD(P_n'\|P)}e^{-mD(Q_m'\|Q)}\nn,
\end{align}
where the last inequality is from Lemma~\ref{lem:typeprob}. It follows that \begin{align}
&~~~~\limsup_{n,m\to\infty}-\frac{1}{n+m}\log\mathbf{P}_{x^ny^m}((\hat{P}_n,\hat{Q}_m)\in\Gamma)\nn\\&\leq \lim_{n,m\to\infty}\frac1{n+m}\left(-t\log((n+1)(m+1))+nD(P'_n\|P)+mD(Q'_m\|Q)\right)\nn\\
&=\lim_{n,m\to\infty}\frac1{n+m}\left(nD(P'_n\|P)+mD(Q'_m\|Q)\right)\nn\\
&=\inf_{(R,S)\in\operatorname{int}\Gamma} cD(R\|P)+(1-c) D(S\|Q).\nn
\end{align}

\paragraph{Lower bound}
\begin{align}
\mathbf{P}_{x^ny^m}((\hat P_n, \hat Q_m)\in\Gamma) 
&= \sum_{(R,S)\in\Gamma\cap \mathcal{P}_{n}(t)\times\mathcal P_m(t)} \mathbf{P}_{x^n}(\hat P_n=R)\,\mathbf{P}_{y^m}(\hat Q_m=S)\nn\\
&\stackrel{(a)}{\leq} \sum_{(R,S)\in\Gamma\cap \mathcal{P}_n(t)\times\mathcal{P}_m(t)} e^{-nD(R\|P)}e^{-mD(S\|Q)}\nn\\
&\stackrel{(b)}{\leq} (n+1)^{t} (m+1)^{t}\sup_{(R,S)\in\Gamma} e^{-nD(R\|P)}e^{-mD(S\|Q)},
\end{align}
where $(a)$ and $(b)$ are due to  Lemma~\ref{lem:typeprob} and Lemma~\ref{lem:numEmpDistribution}, respectively. This gives \[\liminf_{n\to\infty}-\frac{1}{n+m}\log \mathbf{P}_{x^ny^m}((\hat{P}_n,\hat{Q}_m)\in\Gamma)\geq \inf_{(R,S)\in\Gamma} cD(R\|P)+(1-c)D(S\|Q),\]
and hence the lower bound by noting that $\Gamma\in\operatorname{cl}\Gamma$. Indeed, when the right hand side is finite, the infimum over $\Gamma$ equals the infimum over $\operatorname{cl}\Gamma$ as a result of the continuity of KLD for finite alphabets.

\subsection{Polish sample space}
We consider the general case with $\mathcal X$ being a Polish space. Now $\mathcal{P}$ is the space of probability measures on $\mathcal X$ endowed with the topology of weak convergence. To proceed, we introduce another topology on $\mathcal P$ and an equivalent definition of the KLD.

{\it $\tau$-topology: } denote by $\Pi$ the set of all partitions $\mathcal A=\{A_1,\ldots, A_t\}$ of $\mathcal X$ into a finite number of measurable sets $A_i$. For $P\in\mathcal P$, $\mathcal A\in\Pi$, and $\zeta>0$, denote 
\begin{align}
\label{eqn:opentautoplogy}
U(P,\mathcal A, \zeta) = \{P'\in\mathcal P:|P'(A_i)-P(A_i)|<\zeta, i=1,\dots,t\}.
\end{align}
The $\tau$-topology on $\mathcal P$ is the coarsest topology in which the mapping $P\to P(F)$ are continuous for every measurable set $F\subset\mathcal X$. A base for this topology is the collection of the sets (\ref{eqn:opentautoplogy}). We will use $\mathcal P_\tau$ when we refer to $\mathcal P$ endowed with this $\tau$-topology, and write the interior and closure of a set $\Gamma\in\mathcal P_\tau$ as $\operatorname{int}_\tau\Gamma$
and $\operatorname{cl}_\tau\Gamma$, respectively. We remark that the $\tau$-topology is stronger than the weak topology: any open set in $\mathcal P$ w.r.t.~weak topology is also open in $\mathcal P_\tau$ (see more details in \cite{Csiszar2006simple,Dembo2009}). The product topology on $\mathcal P_\tau\times\mathcal P_\tau$ is determined by the base of the form of 
\[U(P,\mathcal A_1, \zeta_1)\times U(Q, \mathcal A_2, \zeta_2),\]
for $(P,Q)\in\mathcal P_\tau\times\mathcal P_\tau$, $\mathcal A_1,\mathcal A_2\in\Pi$, and $\zeta_1,\zeta_2>0$. We still use $\operatorname{int}_{\tau}(\Gamma)$ and $\operatorname{cl}_{\tau}(\Gamma)$ to denote the interior and closure of a set $\Gamma\subset\mathcal P_\tau\times\mathcal P_\tau$. As there always exists $\mathcal A\in\Pi$ that refines both $\mathcal A_1$ and $\mathcal A_2$, any element from the base has an open subset \[\tilde{U}(P,Q,\mathcal A,\zeta):=U(P,\mathcal A, \zeta)\times U(Q, \mathcal A, \zeta)\subset\mathcal P_\tau\times\mathcal P_\tau,\]
for some $\zeta >0$. 

{\it Another definition of the KLD:} an equivalent definition of the KLD will also be used:
\begin{align}
D(P\|Q)=\sup_{\mathcal A\in\Pi} \sum_{i=1}^t P(A_i)\log\frac{P(A_i)}{Q(A_i)}=\sup_{\mathcal A\in\Pi}D(P^{\mathcal A}\|Q^{\mathcal A}),\nn
\end{align}
with the conventions $0\log 0=0\log\frac{0}{0}=0$ and $a\log\frac{a}{0}=+\infty$ if $a>0$. Here $P^{\mathcal A}$ denotes the discrete probability measure $(P(A_1),\ldots,P(A_t))$ obtained from probability measure $P$ and partition $\mathcal A$. It is not hard to verify that for $0<c<1$,
\begin{align}
\label{eqn:KLDdef}
cD(R\|P)+(1-c)D(S\|Q)&=c\sup_{\mathcal{A}_1\in\Pi}D(R^{\mathcal A_1}\|P^{\mathcal A_1})+(1-c)\sup_{\mathcal A_2\in\Pi}D(S^{\mathcal A_2}\|Q^{\mathcal A_2})\nn\\
&=\sup_{\mathcal{A}\in\Pi}\left(cD\left(R^{\mathcal A}\|P^{\mathcal A}\right)+(1-c)D\left(S^{\mathcal A}\|Q^{\mathcal A}\right)\right),
\end{align}
due to the existence of $\mathcal{A}$ that refines both $\mathcal{A}_1$ and $\mathcal A_2$ and the log-sum inequality \cite{Cover2006}.

We are ready to show the extended Sanov's theorem with Polish space.

\paragraph{Upper bound}
It suffices to consider only non-empty open $\Gamma$. If $\Gamma$ is open in $\mathcal P\times\mathcal P$, then $\Gamma$ is also open in $\mathcal P_\tau\times\mathcal P_\tau$. Therefore, for any $(R,S)\in\Gamma$, there exists a finite (measurable) partition $\mathcal A= \{A_1,\ldots,A_t\}$ of $\mathcal X$ and $\zeta>0$ such that 
\begin{align}
\label{eqn:opensubset}
\tilde{U}(R,S,\mathcal A,\zeta)=
\left\{(R',S'):|R(A_i)-R'(A_i)|<\zeta,|S(A_i)-S'(A_i)|<\zeta,i=1,\ldots,t\right\}\subset\Gamma.
\end{align}

Define the function $T:\mathcal X\to\{1,\ldots,t\}$ with $T(x)=i$ for $x\in A_i$. Then $(\hat P_n, \hat Q_m)\in\tilde{U}(R,S,\mathcal A,\zeta)$ with $R,S\in\Gamma$ if and only if the empirical measures $\hat P^{\circ}_n$ of $\{T(x_1),\ldots, T(x_n)\}:=T(x^n)$ and $\hat Q^{\circ}_m$ of $\{T(y_1),\ldots, T(y_m)\}:=T(y^m)$ lie in 
\[U^{\circ}(R,S,\mathcal A, \zeta)=\{(R^\circ,S^{\circ}):|R^{\circ}(i)-R(A_i)|<\zeta, |S^\circ(i)-S(A_i)|<\zeta,i=1,\ldots, t\}\subset \mathbb R^t\times\mathbb R^t.\]
Thus, we have
\begin{align}\mathbf{P}_{x^ny^m}((\hat P_n, \hat Q_m)\in\Gamma)&\geq\mathbf{P}_{x^ny^m}((\hat P_n, \hat Q_m)\in\tilde{U}(R,S,\mathcal A, \zeta))\nn\\
&=\mathbf{P}_{T(x^n)T(y^m)}((\hat P_n^{\circ}, \hat Q_m^{\circ})\in U^{\circ}(R,S,\mathcal A, \zeta)).\nn
\end{align}
As $T(x)$ and $T(y)$ takes values from a finite alphabet and $U^{\circ}(R,S,\mathcal A, \zeta)$ is open, we obtain that 
\begin{align}
&~\limsup_{n\to\infty}-\frac{1}{n+m}\log\mathbf{P}_{x^ny^m}((\hat P_n,\hat Q_m)\in\Gamma)\nn\\\leq&~ \limsup_{n\to\infty}-\frac{1}{n+m}\log\mathbf{P}_{T(x^n)T(y^m)}((\hat P_n^{\circ},\hat Q_m^{\circ})\in U^{\circ}(R,S,\mathcal A, \zeta))\nn\\
\leq&~\inf_{(R^\circ,S^\circ)\in U^{\circ}(R,S,\mathcal A, \zeta)} cD(R^\circ\|P^{\mathcal A})+(1-c)D(S^\circ\|Q^{\mathcal A})\nn\\
=&~\inf_{(R',S')\in\tilde{U}(R,S,\mathcal A, \zeta)} cD(R'^{\mathcal A}\|P^{\mathcal A})+(1-c)D(S'^{\mathcal A}\|Q^{\mathcal A})\nn\\
\leq&~ cD(R\|P)+(1-c)D(S\|Q),
\end{align}
where we have used definition of KLD in Eq.~(\ref{eqn:KLDdef})  and $(R,S)\in\tilde{U}(R,S,\mathcal A, \zeta)$ in the last inequality.  As $(R,S)$ is arbitrary in $\Gamma$, the lower bound is established by taking infimum over $\Gamma$.

\paragraph{Lower bound} With notations
\[\Gamma^{\mathcal A}=\{(R^{\mathcal{A}},S^\mathcal{A}):(R,S)\in\Gamma\},~ \Gamma(\mathcal A)=\{(R,S):(R^{\mathcal  A},S^{\mathcal{A}})\in\Gamma^{\mathcal{A}}\},\]
where $\mathcal A=\{A_1,\ldots,A_t\}$ is a finite partition, it holds that
\begin{align}
&~\mathbf{P}_{x^ny^m}((\hat P_n,\hat Q_m)\in\Gamma)\nn\\
\leq&~\mathbf P_{x^ny^m}((\hat P_n,\hat Q_m)\in\Gamma({\mathcal{A}}))\nn\\
=&~\mathbf P_{x^ny^m}((\hat P_n^{\mathcal A},\hat Q_m^{\mathcal A})\in\Gamma^{\mathcal A} \cap\mathcal{P}_{n}(t)\times{\mathcal P_m}(t))\nn\\
\leq&~(n+1)^t(m+1)^t\max_{(R^\circ,S^\circ)\in\Gamma^{\mathcal A}\cap\mathcal{P}_{n}(t)\times{\mathcal P_m(t)}}\mathbf P_{x^ny^m}\left(\hat{P}_n=R^\circ,\hat{Q}_m=S^\circ\right)\nn\\
\leq&~(n+1)^t(m+1)^t \exp\left(-\inf_{(R,S)\in\Gamma}  \left(nD(R^{\mathcal A}\|P^\mathcal{A})+mD(S^{\mathcal A}\|Q^\mathcal{A})\right)\right),\nn
\end{align}
where the last two inequalities are from Lemmas~\ref{lem:numEmpDistribution} and \ref{lem:typeprob}. As the above holds for any $\mathcal A\in\Pi$, Eq.~(\ref{eqn:KLDdef}) indicates
\begin{align}
&~\limsup_{n\to\infty}\frac{1}{n+m}\log\mathbf P_{x^ny^m}((\hat P_n,\hat Q_m)\in\Gamma)\nn\\
\leq&~\inf_{\mathcal{A}}\left(-\inf_{(R,S)\in\Gamma}  \left(cD(R^{\mathcal A}\|P^\mathcal{A})+(1-c)D(S^{\mathcal A}\|Q^\mathcal{A})\right)\right)\nn\\
=&~-\sup_{\mathcal{A}}\inf_{(R,S)\in\Gamma}  cD(R^{\mathcal A}\|P^\mathcal{A})+(1-c)D(S^{\mathcal A}\|Q^\mathcal{A}).\nn
\end{align}
Then the remaining of obtaining the lower bound is to show  
\[\sup_{\mathcal{A}}\inf_{(R,S)\in\Gamma} cD(R^{\mathcal A}\|P^\mathcal{A})+(1-c)D(S^{\mathcal A}\|Q^\mathcal{A})\geq \inf_{(R,S) \in\operatorname{cl}\Gamma}  cD(R\|P)+(1-c)D(S\|Q).\]




Assuming, without loss of generality, that the left hand side is finite, we only need to show

\[\operatorname{cl}\Gamma\cap B(P,Q,\eta)\neq\varnothing,\]
whenever \[\eta>\sup_{\mathcal{A}}\inf_{(R,S)\in\Gamma} cD(R^{\mathcal A}\|P^\mathcal{A})+(1-c)D(S^{\mathcal A}\|Q^\mathcal{A}).\] Here $B(P,Q,\eta)$ is the divergence ball defined as follows
\[B(P,Q,\eta)=\left\{(R,S):cD(R\|P)+(1-c)D(S\|Q)\leq\eta\right\},\]
which is compact in $\mathcal P\times\mathcal P$~w.r.t.~the weak topology, due to the lower semi-continuity of $D(\cdot\|P)$ and $D(\cdot\|Q)$ as well as the fact that $0<c<1$.

To this end, we first show the following:
\begin{align}
\label{eqn:clGamma}
\operatorname{cl}\Gamma=\bigcap_{\mathcal{A}}\operatorname{cl}\Gamma(\mathcal{A}).
\end{align}
The inclusion is obvious since $\Gamma\in\Gamma(\mathcal{A})$. The reverse means that if $(R,S)\in \operatorname{cl}\Gamma(\mathcal A)$ for each $\mathcal{A}$, then any neighborhood of $(R,S)$ w.r.t.~the weak convergence intersects $\Gamma$. To verify this, let $O(R,S)$ be a neighborhood of $(R,S)$ w.r.t.~the weak convergence, then there exists $\tilde{U}(R,S,\mathcal B,\zeta)\in O(R,S)$ over a finite partition $\mathcal B$ as $O(R,S)$ is also open in $\mathcal P_\tau\times\mathcal P_\tau$. Furthermore, the partition $\mathcal B$ can be chosen to refine $\mathcal A$ so that $\operatorname{cl}\Gamma(\mathcal B)\subset\operatorname{cl}\Gamma(\mathcal A)$. As $\tau$-topology is stronger than the weak topology,  a closed set in the $\mathcal P_\tau\times\mathcal P_\tau$ is closed in $\mathcal P\times\mathcal P$, and hence $\operatorname{cl}\Gamma(\mathcal B)\subset \operatorname{cl}_{\tau} \Gamma(\mathcal B)$. That $(R,S)\in\operatorname{cl}_\tau\Gamma(\mathcal B)$ implies that there exists $(R',S')\in\tilde{U}(R,S,\mathcal B,\zeta)\cap\Gamma(\mathcal B)$. By the definition of $\Gamma(\mathcal B)$, we can also find $(\tilde{R},\tilde{S})\in\Gamma$ such that $\tilde{R}(B_i)=R'(B_i)$ and $\tilde{S}(B_i)=S'(B_i)$ for each $B_i\in\mathcal B$, and hence  $(\tilde{R},\tilde{S})\in\tilde{U}(R,S,\mathcal B, \zeta)$. In summary, we have $(\tilde{R},\tilde{S})\in\tilde{U}(R,S,\mathcal B,\zeta)\subset O(R,S)$ and  $(\tilde{R},\tilde{S})\in\Gamma$. Therefore, $\Gamma\cap O(R,S)\neq\varnothing$ and the claim follows.

Next we show that, for each partition $\mathcal A$, 
\begin{align}
\Gamma(\mathcal A)\cap B(P,Q,\eta)\neq\varnothing.
\end{align}
By Eq.~(\ref{eqn:KLDdef}), there exists $(\tilde{P},\tilde Q)$ such that 
$cD(\tilde{P}^{\mathcal A}\|P^{\mathcal A})+(1-c)D(\tilde{Q}^{\mathcal A}\|Q^{\mathcal A})\leq\eta$. For such $(\tilde P, \tilde Q)$, we can construct $(P',Q')\in\Gamma(\mathcal A)$ as 
\begin{align}
P'(F)&=\sum_{i=1}^t\frac{\tilde{P}(A_i)}{P(A_i)}P(F\cap A_i),\nn\\
Q'(F)&=\sum_{i=1}^t\frac{\tilde{Q}(A_i)}{Q(A_i)}Q(F\cap A_i),\nn
\end{align}
for any measurable subset $F\subset\mathcal X$. If $P(A_i)=0$ ($Q(A_i)=0$) and hence $\tilde{P}(A_i)=0$ ($\tilde{Q}(A_i)=0$), as $D(\tilde P^{\mathcal A}\|P^{\mathcal A})<\infty$ ($D(\tilde Q^{\mathcal A}\|Q^{\mathcal A})<\infty$), for some $i$, the corresponding term in the above equation is set equal to $0$. Then $(P',Q')$ belongs to $\Gamma(\mathcal A)$ and also lies in $B(P,Q,\eta)$. The latter is because $D(P'\|P)=D(\tilde{P}^{\mathcal{A}}\|Q^{\mathcal A})$ and $D(Q'\|Q)=D(\tilde{Q}^{\mathcal A}\|Q^{\mathcal A})$: one can verify that any $\mathcal B$ that refines $\mathcal A$ satisfies \[D({P}'^{\mathcal B}\|P^{\mathcal B})=D(\tilde{P}^{\mathcal A}\|P^{\mathcal A}), D({Q}'^{\mathcal B}\|Q^{\mathcal B})=D(\tilde{Q}^{\mathcal A}\|Q^{\mathcal A}).\]

For any finite collection of partitions $\mathcal A_i\in\Pi$ and $\mathcal A\in\Pi$ refining each $\mathcal A_i$, each $\Gamma(\mathcal A_i)$ contains $\Gamma(\mathcal A)$. This implies that 
\[\bigcap_{i=1}^r\left(\Gamma(\mathcal A_i)\cap B(p,q,\eta)\right)\neq\varnothing,\]
for any finite $r$. Finally, the set $\operatorname{cl}\Gamma(\mathcal A)\cap B(P,Q,\eta)$ for any $\mathcal A$ is compact due to the compactness of $B(P,Q,\eta)$, and any finite collection of them has non-empty intersection. It follows that all these sets is also non-empty. This completes the proof.

\section{Proof of Theorem~\ref{thm:mainresult1}}
\label{sec:proofmainresult}
Two lemmas are needed: the first states the optimal type-II error exponent of any level $\alpha$  test for simple hypothesis testing between two known distributions $P$ and $Q$, and the second provides a large deviation bound on $d_k(P,\hat P_n)$.
\begin{lemma}[Stein's lemma~{\cite{Cover2006,Dembo2009}}]
	\label{lem:Steins}
	Let $x^n$ i.i.d.~$\sim R$. Consider the  test between $H_0: R = P$ and $H_1:R=Q$ 
	with $0<D(P\|Q)<\infty$. Given $0<\alpha<1$, let $\Omega^*(n)=(\Omega_0^*(n), \Omega_1^*(n))$ be the optimal level $\alpha$ test such that the type-II error probability is minimized. Then the type-II error probability decreases exponentially at a rate of $D(P\|Q)$ as $n\to\infty$, that is,
	\[\lim_{n\to\infty}-\frac{1}{n}\log Q(\Omega_0^*(n)) = D(P\|Q).\]
\end{lemma}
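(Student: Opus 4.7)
The plan is to prove Stein's lemma by establishing matching achievability and converse bounds on the optimal type-II error exponent, via the classical log-likelihood-ratio typical-set argument. Because $D(P\|Q) < \infty$, $P \ll Q$ on its support and $\log\frac{dP}{dQ} \in L^1(P)$: its negative part has mean $\int_{dP/dQ < 1}(-\log\frac{dP}{dQ})\frac{dP}{dQ}\,dQ$, whose integrand $-t\log t$ is bounded by $1/e$ on $(0,1)$, and its positive part is then also finite because the two differ by $D(P\|Q)$. The weak law of large numbers therefore applies to give $L_n(x^n) := \frac{1}{n}\sum_{i=1}^n \log\frac{dP}{dQ}(x_i) \to D(P\|Q)$ in $P$-probability, so for every $\epsilon > 0$ the typical set
\[A_n(\epsilon) := \Bigl\{x^n : \bigl|L_n(x^n) - D(P\|Q)\bigr| < \epsilon\Bigr\}\]
satisfies $P^{\otimes n}(A_n(\epsilon)) \to 1$.

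For the achievability direction, I would use $A_n(\epsilon)$ itself as the acceptance region of an explicit test, which is level $\alpha$ for all $n$ sufficiently large. Since this is merely one candidate, its type-II error upper-bounds the optimum. On $A_n(\epsilon)$, the product Radon--Nikodym derivative satisfies $\frac{dQ^{\otimes n}}{dP^{\otimes n}}(x^n) = e^{-n L_n(x^n)} \leq e^{-n(D(P\|Q) - \epsilon)}$, so the change-of-measure identity $Q^{\otimes n}(A_n(\epsilon)) = \int_{A_n(\epsilon)} \frac{dQ^{\otimes n}}{dP^{\otimes n}}\,dP^{\otimes n}$ yields $Q^{\otimes n}(A_n(\epsilon)) \leq e^{-n(D(P\|Q) - \epsilon)}$. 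Thus $\liminf_{n\to\infty} -\frac{1}{n}\log Q(\Omega_0^*(n)) \geq D(P\|Q) - \epsilon$, and sending $\epsilon \downarrow 0$ closes this direction.

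For the converse, I take an arbitrary level-$\alpha$ test with acceptance region $\Omega_0(n)$, so that $P(\Omega_0(n)) \geq 1 - \alpha$. Since $P(A_n(\epsilon)^c) = o(1)$, a one-line union bound gives $P(\Omega_0(n) \cap A_n(\epsilon)) \geq 1 - \alpha - o(1)$. On the intersection the reverse inequality $\frac{dP^{\otimes n}}{dQ^{\otimes n}}(x^n) \leq e^{n(D(P\|Q) + \epsilon)}$ holds, whence
\[Q(\Omega_0(n)) \;\geq\; Q(\Omega_0(n) \cap A_n(\epsilon)) \;\geq\; \bigl(1 - \alpha - o(1)\bigr)\,e^{-n(D(P\|Q) + \epsilon)}.\]
Taking $-\frac{1}{n}\log$ and sending $\epsilon \downarrow 0$ produces $\limsup_{n\to\infty} -\frac{1}{n}\log Q(\Omega_0(n)) \leq D(P\|Q)$, which in particular applies to the optimal $\Omega_0^*(n)$.

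Combining the two bounds yields the claimed equality. I expect the only real subtlety, as opposed to a genuine obstacle, to be the measure-theoretic handling of the possibly unbounded random variable $\log\frac{dP}{dQ}$: the integrability estimate sketched above is what makes the WLLN available, and once granted the rest is a routine typical-set plus change-of-measure computation. Note that the resulting bound is actually insensitive to the particular $\alpha \in (0,1)$, so the argument simultaneously establishes the (strong) converse form of Stein's lemma used elsewhere in the paper.
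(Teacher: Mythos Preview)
Your proof is correct and is essentially the standard typical-set/change-of-measure argument for Stein's lemma. Note, however, that the paper does not supply its own proof of this lemma: it is stated with attribution to \cite{Cover2006,Dembo2009} and used as a black box in the proof of Theorem~\ref{thm:mainresult1}. Your argument is precisely the classical one appearing in those references (e.g., Cover--Thomas, Theorem~11.8.3), so there is nothing to contrast.
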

\begin{lemma}[{\cite{Szabo2015Two,Szabo2016learning}}]
	\label{lem:gamman}
Let $P$, $x^n$, and $\hat P_n$ be defined as in Section~\ref{sec:pre}. Let $k$ be bounded continuous and characteristic, with $0\leq k(\cdot,\cdot)\leq K$. When $x^n~\text{i.i.d.}\sim P$,
	\[\mathbf{P}_{x^n}\left(d_k(P, \hat{P}_{n})> \left({2K}/{n}\right)^{1/2}+\epsilon\right)\leq \exp{\left(-\frac{\epsilon^2n}{2K}\right)}.\]
\end{lemma}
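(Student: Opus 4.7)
The plan is to prove this as a standard mean-plus-concentration argument in the RKHS: first bound the expected value $\mathbf{E}[d_k(P,\hat P_n)]$ by $\sqrt{2K/n}$, and then show the deviation from this expectation is sub-Gaussian via McDiarmid's bounded differences inequality.

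For the expectation bound, I would use the reproducing kernel representation $\mu_k(\hat P_n)=\frac{1}{n}\sum_{i=1}^n k(\cdot,x_i)$, so that $d_k(P,\hat P_n)=\|\mu_k(P)-\mu_k(\hat P_n)\|_{\mathcal F}$. Since the summands are i.i.d.\ with mean $\mu_k(P)$, expanding the squared norm gives
\[
\mathbf{E}\,d_k^2(P,\hat P_n)=\frac{1}{n^2}\sum_{i=1}^n\mathbf{E}\,\bigl\|k(\cdot,x_i)-\mu_k(P)\bigr\|_{\mathcal F}^2=\frac{1}{n}\bigl(\mathbf{E}\,k(x,x)-\|\mu_k(P)\|_{\mathcal F}^2\bigr)\leq \frac{K}{n}.
\]
By Jensen's inequality, $\mathbf{E}\,d_k(P,\hat P_n)\leq \sqrt{K/n}\leq\sqrt{2K/n}$, which supplies the deterministic offset in the lemma.

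For the concentration part, view $d_k(P,\hat P_n)$ as a function $F(x_1,\dots,x_n)$ of the i.i.d.\ samples. If one coordinate $x_i$ is replaced by $x_i'$, the corresponding empirical mean embedding changes by $(1/n)(k(\cdot,x_i)-k(\cdot,x_i'))$, whose RKHS norm is at most $2\sqrt{K}/n$ since $\|k(\cdot,x)\|_{\mathcal F}^2=k(x,x)\leq K$. The triangle inequality then gives the bounded-differences constant $c_i=2\sqrt{K}/n$. McDiarmid's inequality yields
\[
\mathbf{P}\bigl(d_k(P,\hat P_n)-\mathbf{E}\,d_k(P,\hat P_n)>\epsilon\bigr)\leq\exp\!\left(-\frac{2\epsilon^2}{\sum_{i=1}^n c_i^2}\right)=\exp\!\left(-\frac{\epsilon^2 n}{2K}\right).
\]
Combining this with the expectation bound above (noting that $\{d_k>\sqrt{2K/n}+\epsilon\}\subset\{d_k>\mathbf{E}\,d_k+\epsilon\}$) closes out the lemma.

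No step poses a genuine obstacle: the RKHS computation is a one-line variance identity, and McDiarmid applies off the shelf once the triangle-inequality bound on the bounded differences is noted. The only care point is keeping the constants straight so that the $\sqrt{2K/n}$ offset (rather than the tighter $\sqrt{K/n}$ produced by Jensen) and the $\exp(-\epsilon^2 n/(2K))$ tail match the statement; both follow from $c_i=2\sqrt{K}/n$ and the trivial inflation $\sqrt{K/n}\leq\sqrt{2K/n}$.
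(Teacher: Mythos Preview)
Your argument is correct and is exactly the standard proof of this concentration bound: Jensen on the squared RKHS distance for the offset, and McDiarmid with bounded-differences constant $c_i=2\sqrt{K}/n$ for the tail. Note, however, that the paper does not prove this lemma at all; it is quoted from \cite{Szabo2015Two,Szabo2016learning} and used as a black box in the proof of Theorem~\ref{thm:mainresult1}, so there is no in-paper argument to compare against.
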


\begin{proof} 
According to Theorem~\ref{thm:MMDmetrize}, $d_k$ metrizes the weak convergence over $\mathcal P$. That $\alpha_{n,m}\leq\alpha$ is clear from Lemma~\ref{lem:Gretton}, and we only need to show that $\beta_{n,m}$ vanishes exponentially as $n$ and $m$ scale. For convenience, we will write the error exponent of $\beta_{n,m}$ as $\beta$.
	
	We first show $\beta\geq D^*$. With a fixed $\gamma>0$, we have $\gamma_{n,m}\leq \gamma$ for sufficiently large $n$ and $m$. Therefore,
	\begin{align}	
	\label{eqn:eqn1}
	\beta&=\liminf_{n,m\to\infty}-\frac{1}{n+m}\log \mathbf{P}_{x^ny^m}(d_k(\hat P_n, \hat Q_m)\leq\gamma_{n,m})\nn\\
	&\geq\liminf_{n,m\to\infty}-\frac{1}{n+m}\log\mathbf{P}_{x^ny^m}(d_k(\hat P_n, \hat Q_m)\leq\gamma)\nn\\
	&\geq\inf_{(R,S):d_k(R,S)\leq\gamma}cD(R\|P)+(1-c)D(S\|Q)\nn\\
	&:=D_\gamma^*,
	\end{align}
	where the last inequality is from the extended Sanov's theorem and that $d_k$ metrizes weak convergence of $\mathcal P$ so that $\{(R,S):d_k(R,S)\leq\gamma\}$ is closed in the product topology on $\mathcal P\times\mathcal P$. Since $\gamma>0$ can be arbitrarily small, we have 
	\[\beta\geq\lim_{\gamma\to 0^{+}}D^*_\gamma,\]
	where the limit on the right hand side must exist as $D^*_{\gamma}$ is positive, non-decreasing when $\gamma$ decreases, and bounded by $D^*$ that is assumed to be finite. Then it suffices to show 
	\begin{align}
	\lim_{\gamma\to0^{+}} D_\gamma^*=D^*.\nn
	\end{align}

	To this end, let $(R_\gamma, S_\gamma)$ be such that  $d_k(R_\gamma,S_\gamma)\leq\gamma$ and $cD(R_\gamma\|P)+(1-c)D(S_\gamma\|Q)=D^*_\gamma$. Notice that $R_\gamma$ and $S_\gamma$ must lie in
	\[\left\{W:D(W\|P)\leq\frac{D^*}c,  D(W\|Q)\leq\frac{D^*}{1-c}\right\}:= \mathcal W,\]
	for otherwise $D_\gamma^*>D^*$. We remark that $\mathcal W$ is a compact set in $\mathcal P$ as a result of the lower semi-continuity of KLD w.r.t.~the weak topology on $\mathcal P$ \cite{VanErven2014RenyiKLD,Dembo2009}. Existence of such a pair can be seen from the facts that $\{(R,S):d_k(R,S)\leq \gamma\}$ is closed and convex, and that both $D(\cdot{\|P})$ and $D(\cdot\|Q)$ are convex functions \cite{VanErven2014RenyiKLD}.
	
	Assume that $D^*$ cannot be achieved. We can write 
	\begin{align}
	\label{eqn:assu}
	\lim_{\gamma\to0^+}D^*_{\gamma}=D^*-\epsilon,
	\end{align}
	for some $\epsilon>0$. By the definition of lower semi-continuity, there exists a $\kappa_W>0$ for each $W\in\mathcal W$ such that 
	\begin{align}
	\label{eqn:geqhalf}
	cD(R\|P)+(1-c)D(S\|Q)\geq cD(W\|P)+(1-c)D(W\|Q)-\frac\epsilon 2	\geq  D^*-\frac\epsilon 2,
	\end{align}
	whenever $R$ and $S$ are both from 
	\[\mathcal S_W=\left\{R:d_k(R,W)<\kappa_W\right\}.\]
	Here the last inequality  comes from  the definition of $D^*$ given in Theorem~\ref{thm:mainresult1}. To find a contradiction, define 
	\[\mathcal S_W'=\left\{R:d_k(R,W)<\frac{\kappa_W}{2}\right\}.\]
	Since $S_W'$ is open and $\bigcup_W\mathcal S_{W}'$ covers $\mathcal W$, the compactness of $\mathcal W$ implies that there exists finite $\mathcal S_W'$'s, denoted by $\mathcal S_{W_1}',\ldots,\mathcal S_{W_N}'$, covering $\mathcal W$. Define $\kappa^*=\min_{i=1}^N\kappa_{W_i}>0$. Now let $\gamma<{\kappa^*}/{2}$ as $\gamma$ can be made arbitrarily small. Since $\bigcup_{i=1}^N \mathcal S'_{W_i}$ covers $\mathcal W$, we can find a $W_i$ with $R_\gamma\in \mathcal S_{W_i}'\subset\mathcal S_{W_i}$. Thus, it holds that \[d_k(S_\gamma,W_i)\leq d_k(S_\gamma,R_\gamma)+d_k(R_\gamma,W_i)<\kappa_{W_i}.\]
	That is, $S_\gamma$ also lies in $\mathcal S_{W_i}$. By Eq.~(\ref{eqn:geqhalf}) we get
	\[cD(R_\gamma\|P)+(1-c)D(S_\gamma\|Q)\geq D^*-\epsilon/2.\]
	However, by our assumption in Eq.~(\ref{eqn:assu}), it should hold that
	\[cD(R_\gamma\|P)+(1-c)D(S_\gamma\|Q)\leq D^*-\epsilon.\]
	Therefore, $\beta\geq D^*$.
	
The other direction can be simply seen from the optimal type-II error exponent in Theorem~\ref{thm:upperbd}. Alternatively, we can use Stein's lemma in a similar manner to the proof of \cite[Theorem~4]{Zhu2018UHT}. Let $P'$ be such that $cD(P'\|P)+(1-c)D(P'\|Q)=D^*.$
	Such $P'$ exists because $0<D^*<\infty$ and $D(\cdot\|P)$ and $D(\cdot\|Q)$ are convex w.r.t.~$\mathcal P$. That $D^*$ is bounded implies that both $D(P'\|P)$ and $D(P'\|Q)$ are finite. We  have 
	\begin{align}
	\beta_{n,m}=&~\mathbf P_{x^ny^m}(d_k(\hat P_n,\hat Q_m)\leq \gamma_{n,m})\nn\\
	\stackrel{(a)}{\geq}&~\mathbf P_{x^ny^m}(d_k(\hat P_n, P')+d_k(\hat Q_m, P')\leq{\gamma_{n,m}})\nn\\
	\stackrel{(b)}{\geq}&~\mathbf P_{x^ny^m}(d_k(\hat P_n, P')\leq\gamma_{n}, d_k(\hat Q_m, P')\leq{\gamma_{m}})\nn\\
	=&~P(d_k(\hat P_n, P')\leq\gamma_{n})\,Q(d_k(\hat Q_m, P')\leq{\gamma_{m}}),\nn
	\end{align}
	where $(a)$ and $(b)$ are from the triangle inequality of the metric $d_k$, and we pick $ \gamma_n=\sqrt{2K/n}(1+\sqrt{-\log\alpha})$, and  $\gamma_m=\sqrt{2K/m}(1+\sqrt{-\log\alpha})$ so that $\gamma_{n,m}>\gamma_n+\gamma_m$. Then Lemma~\ref{lem:gamman} implies  $P'(d_k(\hat P_n,P')\leq\gamma_n)> 1-\alpha$. For now assume that $D(P'\|P)> 0$ and $D(P'\|Q)>0$. We can regard $\{x^n:d_k(\hat P_n,P')\leq\gamma_n\}$ as an acceptance region for testing $H_0:x^n\sim P'$ and $H_1:x^n\sim P$. Clearly, this test performs no better than the optimal level $\alpha$ test for this simple hypothesis testing in terms of the type-II error probability. Therefore, Stein's lemma implies 
	\begin{align}
	\label{eqn:PP}
	\liminf_{n\to\infty}-\frac{1}{n}\log P(d_k(\hat P_n,P')\leq\gamma_n)\leq D(P'\|P).
	\end{align}
	Analogously, we have
	\begin{align}
	\label{eqn:QQ}
	\liminf_{m\to\infty}-\frac{1}{m}\log Q(d_k( \hat{Q}_m,P')\leq\gamma_m)\leq D(P'\|Q).
	\end{align}
	
	Now assume without loss of generality that $D(P'\|P)=0$, i.e., $P'=P$. Then $D(P'\|Q)>0$ under the alternative hypothesis $H_1:P\neq Q$, and Eq.~(\ref{eqn:QQ}) still holds. Using Lemma~\ref{lem:gamman}, we have $P(d_k(\hat P_n,P')\leq\gamma_n)>1-\alpha$, which gives zero exponent. Therefore, Eq.~(\ref{eqn:PP}) holds with $P'=P$.
	
	As $\lim_{n,m\to\infty}\frac{n}{n+m}=c$, we conclude that	\[\beta=\liminf_{n,m\to\infty}-\frac{1}{n+m}\log\beta_{n,m}\leq D^*.\]
	The proof is complete.
\end{proof}

\section{Proof of Theorem~\ref{thm:lowerbd}}
\label{sec:optimality}
\begin{proof}
	Let $P'$ be such that $cD(P'\|P)+(1-c)D(P'\|Q)=D^*$. Consider first $D(P'\|P)\neq0$ and $D(P'\|Q)\neq0$. Since $D^*$ is assumed to be finite, we have both $D(P'\|P)$ and $D(P'\|Q)$ being finite. This implies that $P'$ is absolutely continuous with respect to both $P$ and $Q$, so the Radon-Nikodym derivatives ${dP'}/{dP}$ and ${dP'}/{dQ}$ exist. 
	
	Define two sets 
	\begin{equation}
	\begin{aligned}
	\label{eqn:kldset}
	A_n &= \left\{x^n:D(P'\|P)-\epsilon\leq\frac{1}{n}\log\frac{dP'(x^n)}{dP(x^n)}\leq D(P'\|P)+\epsilon\right\},\\
	B_m &= \left\{y^m:D(P'\|Q)-\epsilon\leq\frac{1}{m}\log\frac{dP'(y^m)}{dQ(y^m)}\leq D(P'\|Q)+\epsilon\right\},
	\end{aligned}
	\end{equation}
	Recall the definition of the KLD: $D(P'\|P)=\mathbf{E}_{x\sim P'}\log(dP'(x)/dP(x))$ and $D(P'\|Q)=\mathbf{E}_{x\sim P'}\log(dP'(x)/dQ(x))$. By law of large numbers,  we have for any given $\epsilon>0$,
	\begin{align}
	\label{eqn:kldsetlargeProb}
	\mathbf P_{x^ny^m}(A_n\times B_m)\geq 1-\epsilon,~\text{for large enough}~n~\text{and}~m,
	\end{align}
	with $x^n$ and $y^m$ i.i.d.~$\sim P'$. 
	
	Now consider the type-II error probability of level $\alpha$ tests. First, for a level $\alpha$ test, we have its acceptance region satisfies
	\begin{align}
	\label{eqn:anyalphatestLargeProb}
	\mathbf P_{x^ny^m}(\A'(n,m))> 1-\alpha,
	\end{align}
	when $x^n$ and $y^m$ i.i.d.~$\sim P'$, i.e., when the null hypothesis $H_0: P=Q$ holds. Then under the alternative hypothesis $H_1:P\neq Q$, we have 
	\begin{align}
	\beta'_{n,m}&= \mathbf P_{x^ny^m} (\mathcal A'_0(n,m))\nn\\
	&\geq\mathbf P_{x^ny^m} (A_n\times B_m\cap\mathcal A'(n,m))\nn\\
	&= \int_{A_n\times B_m\cap\mathcal{A}'(n,m)} dP(x^n)\,dQ(y^m)\nn\\
	&\stackrel{(a)}{\geq}\int_{A_n\times B_m\cap\mathcal A'(n,m)}2^{-n(D(P'\|P)+\epsilon)}2^{-m(D(P'\|Q)+\epsilon)} dP'(x^n)\,dP'(y^m)\nn\\
	&=2^{-nD(P'\|P)-m(D(P'\|Q)-(n+m)\epsilon} \int_{A_n\times B_m\cap\mathcal A'(n,m)}dP'(x^n)\,dP'(y^m)\nn\\
	&\stackrel{(b)}{\geq} 2^{-nD(P'\|P) -mD(P'\|Q)-(n+m)\epsilon}(1-\alpha-\epsilon),\nn
	\end{align}
	where $(a)$ is from Eq.~(\ref{eqn:kldset}) and $(b)$ is due to Eqs.~(\ref{eqn:kldsetlargeProb}) and (\ref{eqn:anyalphatestLargeProb}). Thus, when $\epsilon$ is small enough so that $1-\alpha-\epsilon>0$, we get 
	\begin{align}
	\label{eqn:upp}
	\liminf_{n,m\to\infty}-\frac{1}{n+m}\log\beta'_{n,m}&\leq \liminf_{n,m\to\infty}\frac{1}{n+m}\left(nD(P'\|P)+m(D(P'\|Q)+(n+m)\epsilon\right)\nn\\&=D^*+\epsilon.
	\end{align}
	
	If a test is an asymptotic level $\alpha$ test, we can replace $\alpha$ by $\alpha+\epsilon'$ where $\epsilon'$ can be made arbitrarily small provided that $n$ and $m$ are large enough. Thus, Eq.~(\ref{eqn:upp}) holds too. Finally, since $\epsilon$ can also be arbitrarily small, we conclude that \[\lim_{n,m\to\infty}-\frac{1}{n+m}\log\beta'_{n,m}\leq D^*.\]		
	If $P'=P$, then $A_n$ contains all $x^n\in\mathcal X^n$ and the above procedure gives the same result. 
	
	The same argument also applies the case with $\lim_{n,m\to\infty}\frac{n}{m}=\infty$ and we have
	\[\lim_{n,m\to\infty}-\frac{1}{m}\log\beta'_{n,m}\leq D(P\|Q).\]
\end{proof}

\end{document}